\documentclass{article}

\usepackage{microtype}
\usepackage{graphicx}
\usepackage{subfigure}
\usepackage{booktabs} 
\usepackage{dsfont}
\usepackage{amsthm}
\usepackage{thmtools,thm-restate}
\usepackage[toc,page]{appendix}
\usepackage[none]{hyphenat}
\usepackage{comment}

\usepackage[accepted]{icml2018}

\usepackage{amsfonts}
\usepackage[fleqn]{amsmath}

\usepackage{bbold}
\usepackage{algorithm}
\usepackage{algorithmic}[1]
\usepackage{graphicx}
\usepackage{subfigure} 
\setlength{\parindent}{0pt}
\newcommand{\norm}[1]{\left\lVert#1\right\rVert}

\newcommand{\That}[2]{\widehat{T}^{#1}(#2)}
\newcommand{\TG}[2]{T^{#1}_{\mathcal G}(#2)}
\newcommand{\ThatG}[2]{\widehat{T}^{#1}_{\mathcal G}(#2)}
\newcommand{\ThatGn}[0]{\widehat{T}_{\mathcal G}}
\newcommand{\Thatn}[0]{\widehat{T}}
\newcommand{\TGn}[0]{T_{\mathcal G}}

\newcommand{\fancyS}[0]{\mathcal{S}}
\newcommand{\fancyA}[0]{\mathcal{A}}
\newcommand{\fancyG}[0]{\mathcal{G}}

\newcommand{\real}[0]{\mathbb R}
\newcommand{\ReLu}{\mbox{ReLu}}

\newtheorem{theorem}{Theorem}
\newtheorem{defn}{Definition}

\newtheorem{corollary}{Corollary}
\newtheorem{claim}{Claim}
\icmltitlerunning{Lipschitz Continuity in Model-based Reinforcement Learning}
\begin{document}

\twocolumn[
\icmltitle{Lipschitz Continuity in Model-based Reinforcement Learning}
\icmlsetsymbol{equal}{*}

\begin{icmlauthorlist}
\icmlauthor{Kavosh Asadi}{equal,br}
\icmlauthor{Dipendra Misra}{equal,cor}
\icmlauthor{Michael L. Littman}{br}
\end{icmlauthorlist}

\icmlaffiliation{br}{Department of Computer Science, Brown University, Providence, USA}
\icmlaffiliation{cor}{Department of Computer Science and Cornell Tech, Cornell University, New York, USA}

\icmlcorrespondingauthor{Kavosh Asadi}{kavosh@brown.edu}

\icmlkeywords{Machine Learning, ICML}

\vskip 0.3in
]
\printAffiliationsAndNotice{\icmlEqualContribution} 
\begin{abstract}
We examine the impact of learning Lipschitz continuous models in the context of model-based reinforcement learning. We provide a novel bound on multi-step prediction error of Lipschitz models where we quantify the error using the Wasserstein metric. We go on to prove an error bound for the value-function estimate arising from Lipschitz models and show that the estimated value function is itself Lipschitz. We conclude with empirical results that show the benefits of controlling the Lipschitz constant of neural-network models.
\end{abstract}	

\section{Introduction}
The model-based approach to reinforcement learning (RL) focuses on predicting the dynamics of the environment to plan and make high-quality decisions~\cite{kaelbling1996reinforcement,sutton98}. Although the behavior of model-based algorithms in tabular environments is well understood and can be effective~\citep{sutton98}, scaling up to the approximate setting can cause instabilities. Even small model errors can be magnified by the planning process resulting in poor performance~\cite{talvitie2014model}.

In this paper, we study model-based RL through the lens of Lipschitz continuity, intuitively related to the smoothness of a function. We show that the ability of a model to make accurate multi-step predictions is related to the model's one-step accuracy, but also to the magnitude of the  Lipschitz constant (smoothness) of the model. We further show that the dependence on the Lipschitz constant carries over to the value-prediction problem, ultimately influencing the quality of the policy found by planning. 

We consider a setting with continuous state spaces and stochastic transitions where we quantify the distance between distributions using the Wasserstein metric. We introduce a novel characterization of models, referred to as a Lipschitz model class, that represents stochastic dynamics using a set of component deterministic functions. This allows us to study any stochastic dynamic using the Lipschitz continuity of its component deterministic functions. To learn a Lipschitz model class in continuous state spaces, we provide an Expectation-Maximization algorithm~\citep{dempster1977maximum}.

One promising direction for mitigating the effects of inaccurate models is the idea of limiting the complexity of the learned models or reducing the horizon of planning~\cite{jiang15}. Doing so can sometimes make models more useful, much as regularization in supervised learning can improve generalization performance \citep{tibshirani1996regression}. In this work, we also examine a type of regularization that comes from controlling the Lipschitz constant of models. This regularization technique can be applied efficiently, as we will show, when we represent the transition model by neural networks.

\section{Background}

We consider the Markov decision process (MDP) setting in which the RL problem is formulated by the tuple $\langle \mathcal{S},\mathcal{A},R,T,\gamma
\rangle$. Here, by $\mathcal{S}$ we mean a continuous state space and by $\mathcal{A}$ we mean a discrete action set. The functions $R:S\times A\rightarrow\mathbb R$ and
$\ T\mathcal{:S\times\ A \rightarrow}\  \textrm{Pr}(\fancyS)$ denote the reward and transition dynamics. Finally, $\gamma \in [0,1)$ is the
discount rate. If $|\mathcal{A}|=1$, the setting is called a Markov reward process (MRP).
\subsection{Lipschitz Continuity}
Our analyses leverage the ``smoothness'' of various functions, quantified as follows.\begin{defn}
Given two metric spaces $(M_1,d_1)$ and $(M_2,d_2)$ consisting of a space and a distance metric, a function $f:M_1\mapsto M_2$ is \emph{Lipschitz continuous} (sometimes simply Lipschitz) if the Lipschitz constant, defined as
\begin{equation}
	K_{d_1,d_2}(f):=\sup_{s_1\in M_1,s_2\in M_1}\frac{d_2\big(f(s_1),f(s_2)\big)}{d_1(s_1,s_2)}\ ,
\end{equation} 
is finite.
\end{defn}

Equivalently, for a Lipschitz $f$, $$\forall s_1,\forall s_2 \quad d_2\big(f(s_1),f(s_2)\big)\leq K_{d_1,d_2}(f)\ d_1(s_1,s_2) \ \ . $$

The concept of Lipschitz continuity is visualized in Figure~\ref{Lipschitz_illustration}.

A Lipschitz function $f$ is called a \emph{non-expansion} when $K_{d_1,d_2}(f)=1$ and a \emph{contraction} when $K_{d_1,d_2}(f)< 1$. Lipschitz continuity, in one form or another, has been a key tool in the theory of reinforcement learning~\citep{bertsekas1975convergence,bertsekas1995neuro,littman1996generalized,muller1996optimal,ferns2004metrics,hinderer2005lipschitz,RachelsonL10,szepesvari2010algorithms,pazis2013pac,pirotta2015policy,pires2016policy,berkenkamp2017safe,bellemare2017distributional} and bandits~\citep{kleinberg2008multi,bubeck2011x}. Below, we also define Lipschitz continuity over a subset of inputs.

\begin{figure}
\centering
                \includegraphics[width=.4\columnwidth]{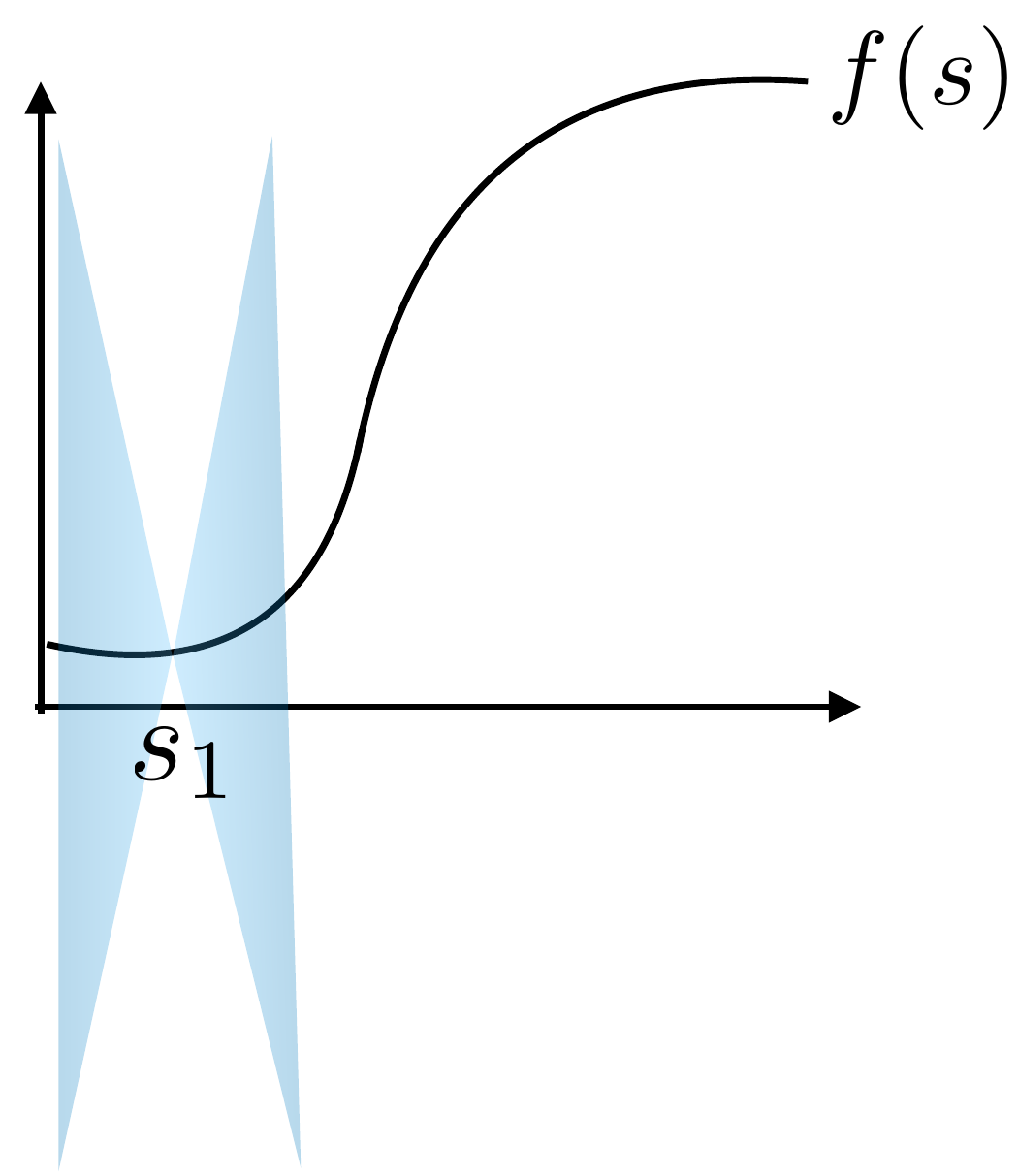}
        \caption{An illustration of Lipschitz continuity. Pictorially, Lipschitz continuity ensures that $f$ lies in between the two affine functions (colored in blue) with slopes $K$ and $-K$.}
        \label{Lipschitz_illustration}
\end{figure}

\begin{defn}
A function $f:M_1 \times \fancyA \mapsto M_2$ is \emph{uniformly Lipschitz continuous} in $\fancyA$ if  \begin{equation}
	K_{d_1,d_2}^\fancyA(f):=\sup_{a\in \fancyA}\sup_{s_1,s_2}\frac{d_2\big(f(s_1,a),f(s_2,a)\big)}{d_1(s_1,s_2)}\ ,
\end{equation}
is finite.
\end{defn}	
Note that the metric $d_1$ is defined only on $M_1$.
\subsection{Wasserstein Metric}
We quantify the distance between two distributions using the following metric:
\begin{defn}
Given a metric space $(M,d)$ and the set $\mathbb P(M)$ of all probability measures on $M$, the \emph{Wasserstein metric} (or the 1st Kantorovic metric) between two probability distributions $\mu_1$ and $\mu_2$ in $\mathbb P(M)$ is defined as
\begin{equation}
	W(\mu_1,\mu_2):=\inf_{j \in \Lambda}\int\!\int j(s_1,s_2) d(s_1,s_2)ds_2\ ds_1 \ ,
	\label{wasserstein_primal}
\end{equation}
where $\Lambda$ denotes the collection of all joint distributions $j$ on $M \times M$ with marginals $\mu_1$ and $\mu_2$~\citep{vaserstein1969markov}.
\end{defn}
Sometimes referred to as ``Earth Mover's distance'', Wasserstein is the minimum expected distance between pairs of points where the joint distribution $j$ is constrained to match the marginals $\mu_1$ and $\mu_2$. New applications of this metric are discovered in machine learning, namely in the context of generative adversarial networks~\citep{arjovsky2017wasserstein} and value distributions in reinforcement learning~\citep{bellemare2017distributional}.

Wasserstein is linked to Lipschitz continuity using duality:
\begin{equation}
	W(\mu_1,\mu_2)=\sup_{f:K_{d,d_{\real}}(f)\leq 1}\int \big(f(s)\mu_{1}(s)-f(s)\mu_{2}(s)\big)ds\ .
\label{KR_duality}
\end{equation}
This equivalence, known as Kantorovich-Rubinstein duality \citep{villani2008optimal}, lets us compute Wasserstein by maximizing over a Lipschitz set of functions $f:\mathcal{S}\mapsto \real$, a relatively easier problem to solve. In our theory, we utilize both definitions, namely the primal definition (\ref{wasserstein_primal}) and the dual definition (\ref{KR_duality}).
\section{Lipschitz Model Class}
We introduce a novel representation of stochastic MDP transitions in terms of a distribution over a set of deterministic components.
\begin{defn}Given a metric state space $(\fancyS,d_{\fancyS})$ and an action space $\fancyA$, we define $F_g$ as a collection of functions: $F_g=\{f:\fancyS\mapsto \fancyS\}$ distributed according to $g(f\mid a)$ where $a \in \fancyA$. We say that $F_g$ is \emph{a Lipschitz model class} if $$K_{F}:=\sup_{f\in F_g}\ K_{d_{\fancyS},d_{\fancyS}}(f)\ ,$$
is finite.
\label{Lipschitz_model_class}
\end{defn}

Our definition captures a subset of stochastic transitions, namely ones that can be represented as a state-independent distribution over deterministic transitions. An example is provided in Figure~\ref{russel_norvig_grid}. We further prove in the appendix (see Claim 1) that any finite MDP transition probabilities can be decomposed into a state-independent distribution $g$ over a finite set of deterministic functions $f$.
\newcommand{\ack}[1]{^{\mbox{\footnotesize{\em #1}}}}
\newcommand{\ac}[1]{{\mbox{{\em #1}}}}

\begin{figure}
\centering
            \includegraphics[width=0.6\columnwidth]{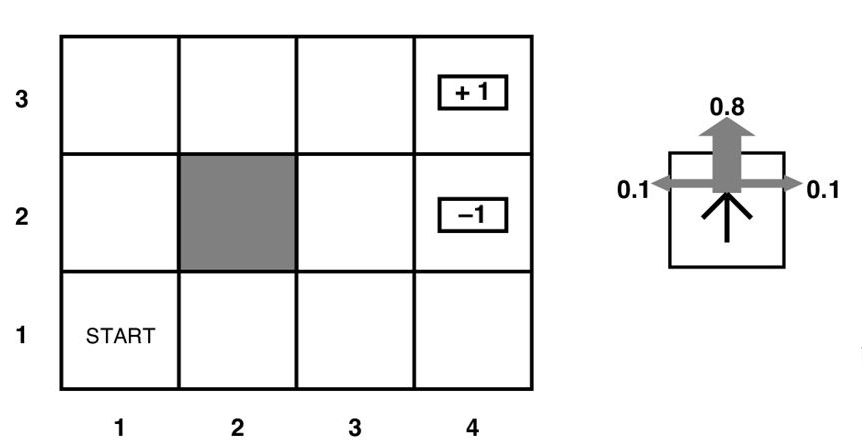}
       	        \caption{An example of a Lipschitz model class in a gridworld environment~\cite{russel1995artifial}. The dynamics are such that any action choice results in an attempted transition in the corresponding direction with probability 0.8 and in the neighboring directions with probabilities 0.1 and 0.1. We can define $F_{g}=\{f\ack{up},f\ack{right},f\ack{down},f\ack{left}\}$ where each $f$ outputs a deterministic next position in the grid (factoring in obstacles). For $a=\ac{up}$, we have: $g(f\ack{up}\mid a=\ac{up})=0.8,\ g(f\ack{right}\mid a=\ac{up})=g(f\ack{left}\mid a=\ac{up})=0.1$, and $g(f\ack{down}\mid a=\ac{up})=0$. Defining distances between states as their Manhattan distance in the grid, then $\forall f \sup_{s_1,s_2}\big(d(f(s_1),f(s_2)\big)/d(s_1,s_2)=2$, and so $K_F=2$. So, the four functions and $g$ comprise a Lipschitz model class.}
        \label{russel_norvig_grid}
\end{figure}

Associated with a Lipschitz model class is a transition function given by:
\begin{eqnarray*}
\widehat T(s'\mid s,a)=\sum_{f}\mathds{1}\big(f(s)=s'\big)\ g(f\mid a)\ .\\
\end{eqnarray*}
Given a state distribution $\mu(s)$, we also define a generalized notion of transition function $\ThatGn(\cdot\mid \mu, a)$ given by:
\begin{eqnarray*}
\ThatG{}{s'\mid \mu,a}=\int_{s}\underbrace{\sum_{f}\mathds{1}\big(f(s)=s'\big)\ g(f\mid a)}_{\widehat T(s'\mid s,a)}\mu(s)ds\ .
\end{eqnarray*}

We are primarily interested in $K^{\fancyA}_{d,d}(\ThatGn)$, the Lipschitz constant of $\ThatGn$.
However, since $\ThatGn$ takes as input a probability distribution and also outputs a probability distribution, we require a notion of distance between two distributions. This notion is quantified using Wasserstein and is justified in the next section.
\section{On the Choice of Probability Metric}
We consider the stochastic model-based setting and show through an example that the Wasserstein metric is a reasonable choice compared to other common options.

Consider a uniform distribution over states $\mu(s)$ as shown in black in Figure~\ref{wasserstein_is_good} (top). Take a transition function $\TGn$ in the environment that, given an action $a$, uniformly randomly adds or subtracts a scalar $c_1$. The distribution of states after one transition is shown in red in  Figure~\ref{wasserstein_is_good} (middle). Now, consider a transition model $\ThatGn$ that approximates $\TGn$ by uniformly randomly adding or subtracting the scalar $c_2$. The distribution over states after one transition using this imperfect model is shown in blue in Figure~\ref{wasserstein_is_good} (bottom).
\begin{figure}
\centering
                \includegraphics[width=0.55\columnwidth]{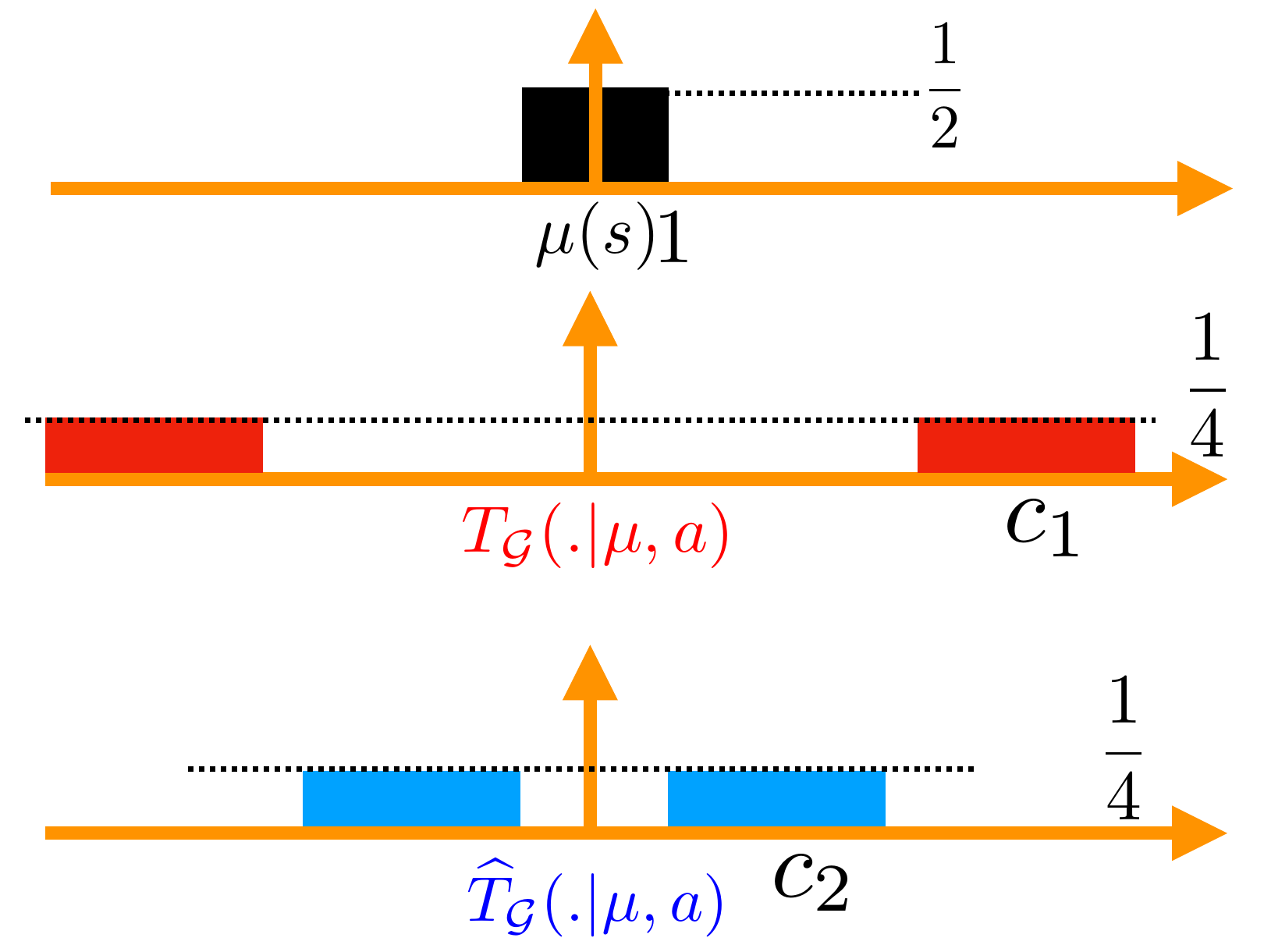}
       	        \caption{A state distribution $\mu(s)$ (top), a stochastic environment that randomly adds or subtracts $c_1$ (middle), and an approximate transition model that randomly adds or subtracts a second scalar $c_2$ (bottom).}
        \label{wasserstein_is_good}
\end{figure}
We desire a metric that captures the similarity between the output of the two transition functions. We first consider Kullback-Leibler (KL) divergence and observe that:
\begin{eqnarray*}
    &&KL\big(\TGn(\cdot \mid \mu,a),\ThatGn(\cdot \mid \mu,a)\big)\\
    &&:=\int \TGn(s'\mid\mu,a) \log{\frac{\TGn(s'\mid\mu,a)}{ \ThatGn(s'\mid\mu,a)}ds'}=\infty\ ,
\end{eqnarray*}
unless the two constants are exactly the same.

The next possible choice is Total Variation (TV) defined as:
\begin{eqnarray*}
    &&TV\big(\TG{}{\cdot \mid \mu,a},\ThatG{}{\cdot \mid \mu,a}\big)\\
    &&:=\frac{1}{2}\int\big|\TG{}{s'\mid\mu,a}-\ThatG{}{s'\mid\mu,a}\big| ds'=1\ ,
\end{eqnarray*}
if the two distributions have disjoint supports regardless of how far the supports are from each other.

In contrast, Wasserstein is sensitive to how far the constants are as:
$$W\big(\TGn(\cdot\mid\mu,a),\ThatGn(\cdot\mid\mu,a)\big)=|c_1-c_2|\ .$$
It is clear that, of the three, Wasserstein corresponds best to the intuitive sense of how closely $\TGn$ approximates $\ThatGn$. This is particularly important in high-dimensional spaces where the true distribution is known to usually lie in low-dimensional manifolds. \citep{narayanan2010sample}
\section{Understanding the Compounding Error Phenomenon}

To extract a prediction with a horizon $n>1$, model-based algorithms typically apply the model for $n$ steps by taking the state input in step $t$ to be the state output from the step $t-1$. Previous work has shown that model error can result in poor long-horizon predictions and ineffective planning~\citep{talvitie2014model,talvitie2017self}. Observed even beyond reinforcement learning~\citep{lorenz1972predictability,venkatraman2015improving}, this is referred to as the \emph{compounding error phenomenon}. The goal of this section is to provide a bound on multi-step prediction error of a model. We formalize the notion of model accuracy below: \begin{defn}
Given an MDP with a transition function $T$, we identify a Lipschitz model $F_g$ as \textit{$\Delta$-accurate} if its induced $\widehat{T}$ satisfies:
\begin{equation*}\forall s \ \forall a\quad W\big(\widehat T(\cdot\mid s,a),T(\cdot\mid s,a)\big)\leq \Delta\ .\end{equation*}\label{delta_accurate_model}
\end{defn}We want to express the multi-step Wasserstein error in terms of the single-step Wasserstein error and the Lipschitz constant of the transition function $\ThatGn$. We provide a bound on the Lipschitz constant of $\ThatGn$ using the following lemma:
\begin{restatable}{lemma}{primelemma}
	\label{lemma1}
	A generalized transition function $\ThatGn$ induced by a Lipschitz model class $F_g$ is Lipschitz with a constant:
	\begin{eqnarray*}
	K^{\fancyA}_{W,W}(\ThatGn):=\sup_{a}\!\sup_{\mu_1,\mu_2}\!\frac{W\big(\ThatGn(\cdot|\mu_1,a),\ThatGn(\cdot|\mu_2,a)\big)}{W(\mu_1,\mu_2)}\!\leq\! K_{F}
	\end{eqnarray*}
	\label{Lipschit_model_class_lemma}
\end{restatable}
Intuitively, Lemma \ref{lemma1} states that, if the two input distributions are similar, then for any action the output distributions given by $\ThatGn$ are also similar up to a $K_F$ factor. We prove this lemma, as well as the subsequent lemmas, in the appendix.

Given the one-step error (Definition \ref{delta_accurate_model}), a start state distribution $\mu$ and a fixed sequence of actions $a_0,...,a_{n-1}$, we desire a bound on $n$-step error:
$$\delta(n):=W\big(\ThatGn^n(\cdot\mid\mu),\TGn^n(\cdot\mid\mu)\big)\ ,$$
where $\ThatGn^n(\cdot|\mu):=\underbrace{\ThatG{}{\cdot|\ThatG{}{\cdot|...\ThatG{}{\cdot|\mu,a_0}...,a_{n-2}},a_{n-1}}}_{n\ \textrm{recursive calls}}$ and $\TGn^n(\cdot\mid\mu)$ is defined similarly. We provide a useful lemma followed by the theorem.
\begin{restatable}{lemma}{compositionlemma}
(Composition Lemma) Define three metric spaces $(M_1,d_1)$, $(M_2,d_2)$, and $(M_3,d_3)$. Define Lipschitz functions $f:M_2\mapsto M_3$ and $g:M_1\mapsto M_2$ with constants $K_{d_2,d_3}(f)$ and $K_{d_1,d_2}(g)$. Then, $h:f \circ g:M_1 \mapsto M_3$ is Lipschitz with constant $K_{d_1,d_3}(h)\leq K_{d_2,d_3}(f) K_{d_1,d_2}(g)$.
\label{lemma_composition}
\end{restatable}
Similar to composition, we can show that summation preserves Lipschitz continuity with a constant bounded by the sum of the Lipschitz constants of the two functions. We omitted this result due to brevity.
\begin{theorem}
\label{theorem_compound}
Define a $\Delta$-accurate $\ThatGn$ with the Lipschitz constant $K_F$ and an MDP with a Lipschitz transition function $\TGn$ with constant $K_{T}$. Let $\bar K=\min\{K_{F},K_{T}\}$. Then $\forall n\geq 1$:
$$\delta(n):=W\big(\ThatG{n}{\cdot\mid\mu},T^{n}_{\fancyG}(\cdot\mid\mu)\big)\leq \Delta\sum_{i=0}^{n-1} (\bar K)^{i} \ .$$
\end{theorem}
\begin{proof}
We construct a proof by induction. Using Kantarovich-Rubinstein duality (Lipschitz property of $f$ not shown for brevity) we first prove the base of induction:

$\begin{aligned}
&\delta(1):=W\big(\ThatG{}{\cdot\mid\mu,a_0},T^{}_{\fancyG}(\cdot\mid\mu,a_0)\big) \\
&:=\sup_{f}\!\!\int \!\int\!\! \big(\That{}{s'\mid s,a_0}\!-\!T(s'\mid s,a_0)\big)f(s')\mu(s)\ ds\ ds' \\
&\leq \int\!\!\underbrace{\sup_{f} \!\!\int\!\!\big(\That{}{s'|s,a_0}\!-\!T(s'|s,a_0)\big)f(s')\ ds'}_{=W\big(\That{}{\cdot\mid s,a_0},T(\cdot\mid s,a_0)\big) \textrm{due to duality (\ref{KR_duality})} }\mu(s)\ ds \\
&=\int\underbrace{W\big(\That{}{\cdot\mid s,a_0},T(\cdot\mid s,a_0)\big)}_{\leq\Delta\ \textrm{due to Definition \ref{delta_accurate_model}}}\mu(s)\ ds \\
&\leq\int\Delta\ \mu(s)\ ds=\Delta \ .
\end{aligned}$

We now prove the inductive step. Assuming $\delta(n-1):=W\big(\ThatG{n-1}{\cdot\mid\mu},T^{n-1}_{\fancyG}(\cdot\mid\mu)\big)\leq \Delta\sum_{i=0}^{n-2} (K_F)^i$ we can write:

$\begin{aligned}
&\delta(n):=W\big(\ThatG{n}{\cdot\mid\mu},T^{n}_{\fancyG}(\cdot\mid\mu)\big)\nonumber\\
&\leq  	W\Big(\ThatG{n}{\cdot\mid\mu},\ThatGn \big(\cdot \mid\TG{n-1}{\cdot\mid\mu},a_{n-1}\big)\Big) \nonumber\\
&+\!W\Big(\ThatGn \big(\cdot\mid\TG{n-1}{\cdot\mid\mu},a_{n-1}\big),T^{n}_{\fancyG}(\cdot\mid\mu)\! \Big)\ \textrm{(Triangle ineq)}\\
&=\!W\Big(\!\ThatG{}{\cdot\mid\ThatG{n-1}{\cdot\mid\mu},a_{n-1}},\ThatGn \big(\cdot\mid\TG{n-1}{\cdot\mid\mu},a_{n-1}\big)\!\Big) \textrm{}\nonumber\\
&\!+\!W\Big(\!\ThatGn\!\big(\cdot\mid\TG{n-1}{\cdot\mid\mu},a_{n-1}\big),T^{}_{\fancyG}(\cdot\mid\TG{n-1}{\cdot\mid\mu},a_{n-1}) \Big)\nonumber
\end{aligned}$

We now use Lemma~\ref{Lipschit_model_class_lemma} and Definition~\ref{delta_accurate_model} to upper bound the first and the second term of the last line respectively.
\begin{eqnarray}
\delta(n)&\leq& K_F\ W\big(\ThatG{n-1}{\cdot\mid\mu},T^{n-1}_{\fancyG}(\cdot\mid\mu)\big)+\Delta\nonumber\\
&=& K_F\ \delta(n-1)+\Delta\le \Delta \sum_{i=0}^{n-1}(K_F)^i\label{first_inequality} \ .
\end{eqnarray}
Note that in the triangle inequality, we may replace $\ThatGn \big(\cdot\mid~\TG{n-1}{\cdot\mid\mu}\big)$ with $\TGn \big(\cdot\mid\ThatG{n-1}{\cdot\mid\mu}\big)$ and follow the same basic steps to get:
\begin{equation}
    W\big(\ThatG{n}{\cdot\mid\mu},T^{n}_{\fancyG}(\cdot\mid\mu)\big) \leq \Delta \sum_{i=0}^{n-1}(K_T)^i\ . \label{second_inequality}
\end{equation}
Combining (\ref{first_inequality}) and~(\ref{second_inequality}) allows us to write:
\begin{eqnarray*}
    \delta(n)&=&W\big(\ThatG{n}{\cdot\mid \mu},T^{n}_{\fancyG}(\cdot\mid \mu)\big)\nonumber\\
    &\leq& \min\left\{\Delta \sum_{i=0}^{n-1}(K_T)^i,\Delta \sum_{i=0}^{n-1}(K_F)^i\right\}\nonumber\\
	&=& \Delta \sum_{i=0}^{n-1}(\bar K)^i\ ,
\end{eqnarray*}
which concludes the proof.
\end{proof}

There exist similar results in the literature relating one-step transition error to multi-step transition error and sub-optimality bounds for planning with an approximate model. The Simulation Lemma~\cite{kearns2002near,strehl2009reinforcement} is for discrete state MDPs and relates error in the one-step model to the value obtained by using it for planning.  A related result for continuous state-spaces~\citep{kakade2003exploration} bounds the error in estimating the probability of a trajectory using total variation. A second related result~\citep{venkatraman2015improving} provides a slightly looser bound for prediction error in the deterministic case---our result can be thought of as a generalization of their result to the probabilistic case.
\section{Value Error with Lipschitz Models}

We next investigate the error in the state-value function induced by a Lipschitz model class. 
To answer this question, we consider an MRP $M_1$ denoted by $\langle \fancyS,\fancyA, T,R,\gamma\rangle$ and a second MRP $M_2$ that only differs from the first in its transition function $\langle\fancyS, \fancyA, \widehat{T},R,\gamma\rangle$. Let $\fancyA = \{a\}$ be the action set with a single action $a$. We further assume that the reward function is only dependent upon state.
We first express the state-value function for a start state $s$ with respect to the two transition functions. By $\delta_s$ below, we mean a Dirac delta function denoting a distribution with probability 1 at state $s$.
\begin{eqnarray*}
&&V_T(s) := \sum_{n=0}^\infty \gamma^n \int \TGn^n(s'|\delta_s)R(s')\ ds'\ ,
\end{eqnarray*}
\begin{eqnarray*}
&&V_{\widehat T}(s) := \sum_{n=0}^\infty \gamma^n \int \ThatGn^n(s'|\delta_s)R(s')\ ds'\ .
\end{eqnarray*}

Next we derive a bound on $\big|V_T(s)-V_{\widehat T}(s)\big|\ \forall s$.

\begin{theorem}
Assume a Lipschitz model class $F_g$ with a $\Delta$-accurate $\widehat{T}$ with $\bar{K} = \min\{K_F, K_T\}$. Further, assume a Lipschitz reward function with constant $K_R=K_{d_\fancyS,\real}(R)$. Then $\forall s\in \fancyS$ and $\bar{K} \in [0, \frac{1}{\gamma})$
$$\big|V_T(s)-V_{\widehat T}(s)\big| \le \frac{\gamma K_R\Delta}{(1-\gamma)(1-\gamma \bar{K})}\ .$$
\label{theorem:value_error}
\end{theorem}
\begin{proof}
We first define the function 
$f(s) = \frac{R(s)}{K_R}\ .$ It can be observed that $K_{d_\fancyS,\real}(f)=1$. We now write:
\begin{eqnarray*}
    && V_T(s)-V_{\widehat T}(s)\\
    &&=\sum_{n=0}^{\infty}\gamma^n\int R(s')\big(\TGn^{n}(s'\mid \delta_s)-\ThatGn^{n}(s'\mid\delta_s)\big)\ ds'\\
    &&= K_R\sum_{n=0}^{\infty}\gamma^n\int f(s')\big(\TGn^{n}(s'\mid\delta_s)-\ThatGn^{n}(s'\mid\delta_s)\big)\ ds'
\end{eqnarray*}
Let $\mathcal{F} = \{h: K_{d_\mathcal{S}, \mathbb{R}}(h) \le 1\}$. Then given $f \in \mathcal{F}$:
\begin{eqnarray*}
&&\!\!\!\!\!\!\!\!\!\!\!\!\!\!K_R\sum_{n=0}^{\infty}\gamma^n\int f(s')\big(\TGn^{n}(s'|\delta_s)-\ThatGn^{n}(s'|\delta_s)\big)ds'\\
&\le& K_R\sum_{n=0}^{\infty}\gamma^n\underbrace{\sup_{f\in \mathcal{F}}\int f(s')\big(\TGn^{n}(s'\mid\delta_s)-\ThatGn^{n}(s'\mid\delta_s)\big)ds'}_{:=W\big(\TGn^n(.\mid \delta_s),\ThatGn^{n}(.\mid \delta_s)\big) \textrm{due to duality (\ref{KR_duality})}}\\
&=& K_R\sum_{n=0}^{\infty}\gamma^n \underbrace{W\big(\TGn^n(.\mid \delta_s),\ThatGn^{n}(.\mid \delta_s)\big)}_{\leq \sum_{i=0}^{n-1}\Delta (\bar{K})^i \textrm{ due to Theorem \ref{theorem_compound}}} \ \\
&\leq& K_R\sum_{n=0}^{\infty}\gamma^n \sum_{i=0}^{n-1}\Delta (\bar{K})^i\\
&=&K_R\Delta\sum_{n=0}^{\infty}\gamma^n \frac{1-\bar{K}^n}{1-\bar{K}}\\
&=&\frac{\gamma K_R\Delta}{(1-\gamma)(1-\gamma \bar{K})}\ .
\end{eqnarray*}
We can derive the same bound for $V_{\widehat T}(s)-V_T(s)$ using the fact that Wasserstein distance is a metric, and therefore symmetric, thereby completing the proof.
\end{proof}
Regarding the tightness of our bounds, we can show that when the transition model is deterministic and linear then  Theorem \ref{theorem_compound} provides a tight bound. Moreover, if the reward function is linear, the bound provided by Theorem \ref{theorem:value_error} is tight. (See Claim 2 in the appendix.) Notice also that our proof does not require a bounded reward function.
\section{Lipschitz Generalized Value Iteration}
We next show that, given a Lipschitz transition model, solving for the fixed point of a class of Bellman equations yields a Lipschitz state-action value function. Our proof is in the context of Generalized Value Iteration (GVI) \citep{littman1996generalized}, which defines Value Iteration \citep{bellman1957markovian} for planning with arbitrary backup operators.

\begin{algorithm}
\caption{GVI algorithm}
\begin{algorithmic}
   \STATE {\bfseries Input:} initial $\widehat Q(s,a)$, $\delta$, and choose an operator $f$
   \REPEAT
   \FOR{each $s,a\in \mathcal{S}\times \mathcal{A}$}
   \STATE $\widehat Q(s,a)\! \leftarrow\! R(s,a)\!+\!\gamma\!\int\widehat T(s'\mid s,a) f\big(\widehat Q(s',\cdot)\big)ds'$ 
   \ENDFOR
   \UNTIL{ \textrm{convergence}}
\end{algorithmic}
\label{GVI}
\end{algorithm}
To prove the result, we make use of the following lemmas.
\begin{restatable}{lemma}{mullerLemma}
	Given a Lipschitz function $f:\fancyS\mapsto \mathbb R$ with constant $K_{d_{\fancyS},d_{\real}}(f)$:
	\begin{equation*}K_{d_\fancyS,d_{\real}}^\fancyA \Big(\int\widehat T(s'|s,a)f(s')ds'\Big)\leq K_{d_{\fancyS},d_{\real}}(f) K^{\fancyA}_{d_\fancyS,W}\big(\widehat T\big)\ .\end{equation*}
\label{lipschitz_transition}
\end{restatable}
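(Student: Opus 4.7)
My plan is to reduce the claim directly to Kantorovich--Rubinstein duality (equation (\ref{KR_duality})) together with the definition of $K^{\fancyA}_{d_\fancyS,W}(\widehat T)$. Fix an arbitrary action $a \in \fancyA$ and two states $s_1, s_2 \in \fancyS$, and let $V(s,a) := \int \widehat T(s'|s,a) f(s')\, ds'$. The goal is to bound $|V(s_1,a) - V(s_2,a)|$ by a multiple of $d_\fancyS(s_1,s_2)$ that does not depend on $a$.

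I would write the difference as a single integral against the signed measure $\widehat T(\cdot|s_1,a) - \widehat T(\cdot|s_2,a)$. To invoke duality, I would normalize: set $h := f / K_{d_\fancyS,d_\real}(f)$, so that $K_{d_\fancyS,d_\real}(h) \leq 1$. Then
\begin{equation*}
\tfrac{1}{K_{d_\fancyS,d_\real}(f)}\bigl(V(s_1,a) - V(s_2,a)\bigr) = \int h(s')\bigl[\widehat T(s'|s_1,a) - \widehat T(s'|s_2,a)\bigr]\, ds',
\end{equation*}
and by Kantorovich--Rubinstein duality this is bounded above by $W\bigl(\widehat T(\cdot|s_1,a), \widehat T(\cdot|s_2,a)\bigr)$. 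Applying the same argument to $-h$ (also $1$-Lipschitz) yields a matching lower bound, so the absolute value is controlled by the Wasserstein distance between the two conditional transition distributions.

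Finally, by the definition of $K^{\fancyA}_{d_\fancyS,W}(\widehat T)$, this Wasserstein distance is at most $K^{\fancyA}_{d_\fancyS,W}(\widehat T)\, d_\fancyS(s_1,s_2)$. Combining and multiplying back by $K_{d_\fancyS,d_\real}(f)$ gives
\begin{equation*}
|V(s_1,a) - V(s_2,a)| \leq K_{d_\fancyS,d_\real}(f)\, K^{\fancyA}_{d_\fancyS,W}(\widehat T)\, d_\fancyS(s_1,s_2).
\end{equation*}
Since the bound is uniform in $a$, taking $\sup_a \sup_{s_1,s_2}$ of the ratio completes the proof.

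There is no real obstacle here; the only subtlety is the normalization step needed to put $f$ into the $1$-Lipschitz class on which the dual representation of Wasserstein is stated, and remembering to handle the sign via $-h$ so that the bound applies to $|V(s_1,a)-V(s_2,a)|$ rather than just to the signed difference.
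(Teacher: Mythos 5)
Your proposal is correct and follows essentially the same route as the paper's proof: normalize $f$ by its Lipschitz constant to obtain a $1$-Lipschitz function, bound the resulting integral against $\widehat T(\cdot|s_1,a)-\widehat T(\cdot|s_2,a)$ by the Wasserstein distance via Kantorovich--Rubinstein duality, and then invoke the definition of $K^{\fancyA}_{d_\fancyS,W}(\widehat T)$. Your explicit use of $-h$ to control the absolute value is a slightly cleaner way of handling the sign than the paper's implicit reliance on the symmetry of the $1$-Lipschitz class, but the argument is the same.
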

\vspace*{-\baselineskip}
\begin{restatable}{lemma}{LipschitzOperators}
	The following operators~\cite{mellowmax} are Lipschitz with constants:
	\begin{enumerate}
		\item $K_{\norm{}_{\infty},d_{R}}(\max(x))=K_{\norm{}_{\infty},d_{R}}\big(\textrm{mean}(x)\big)=K_{\norm{}_{\infty},d_{R}}(\epsilon$-$greedy(x))=1$ 
		\item $K_{\norm{}_{\infty},d_{R}}(mm_\beta(x):=\frac{\log\frac{\sum_{i}e^{\beta x_i}}{n}}{\beta})=1$
		\item $K_{\norm{}_{\infty},d_{R}}(boltz_{\beta}(x):=\frac{\sum_{i=1}^n x_i e^{\beta x_i}}{{\sum_{i=1}^n}e^{\beta}x_i})\leq\sqrt{|A|}+\beta V_{\max}|A|$
	\end{enumerate}
	\label{operators_Lipschitzness}
\end{restatable}
        \begin{theorem}
	For any non-expansion backup operator $f$ outlined in Lemma~\ref{operators_Lipschitzness}, GVI computes a value function with a Lipschitz constant bounded by $\frac{K^{\fancyA}_{d_{\fancyS},d_{R}}(R)}{1-\gamma K_{d_{\fancyS},W}( T)}\ $ if $\gamma K_{d_{\fancyS},W}^{\fancyA}( T)< 1$.
	\label{theorem_lipschitz_q}
\end{theorem}
\begin{proof}
From Algorithm~\ref{GVI}, in the $n$th round of GVI updates:
	$$\widehat Q_{n+1}(s,a) \leftarrow R(s,a)+\gamma\int T(s'\mid s,a) f\big(\widehat Q_{n}(s',\cdot)\big) ds'.$$
	Now observe that:
	\begin{eqnarray*}
	&&\!K^{\fancyA}_{d_{\fancyS},d_{R}}(\widehat Q_{n+1})\\
		&&\leq \! K^{\fancyA}_{d_{\fancyS},d_{R}}\!(R)\!+\!\gamma K_{d_{\fancyS},d_\real}^{\fancyA}\!\big(\!\int\!T(s'\mid s,a)\! f\big( \widehat Q_{n}(s',\cdot)\big)ds'\big) \\
		&&\leq K^{\fancyA}_{d_{\fancyS},d_{R}}(R)+\gamma K_{d_{\fancyS},W}^{\fancyA}( T)\ K_{d_{\fancyS,\real}}\Big(f\big( \widehat Q_{n}(s,\cdot)\big)\Big)\\
		&&\leq
		K^{\fancyA}_{d_{\fancyS},d_{R}}(R)+\gamma K_{d_{\fancyS},W}^{\fancyA}( T) K_{\norm{\cdot}_{\infty},d_{\real}}(f) K^{\fancyA}_{d_{\fancyS},d_\real}(\widehat Q_{n})\\
		&&=
		K^{\fancyA}_{d_{\fancyS},d_{R}}(R)+\gamma K_{d_{\fancyS},W}^{\fancyA}( T) K^{\fancyA}_{d_{\fancyS},d_\real}(\widehat Q_{n})
	\end{eqnarray*}
Where we used Lemmas \ref{lipschitz_transition}, \ref{lemma_composition}, and \ref{operators_Lipschitzness} for the second, third, and fourth inequality respectively. Equivalently:
\begin{eqnarray*}
K^{\fancyA}_{d_{\fancyS},d_{R}}(\widehat Q_{n+1})&\leq&
K^{\fancyA}_{d_{\fancyS},d_{\real}}(R)\!\sum_{i=0}^{n}\!\big(\gamma K_{d_{\fancyS},W}^{\fancyA}( T)\big)^i\\
&+&\big(\gamma K_{d_{\fancyS},W}^{\fancyA}(T)\big)^n \ K^{\fancyA}_{d_{\fancyS},d_{\real}}(\widehat Q_{0}) \ .
\end{eqnarray*}
By computing the limit of both sides, we get:
\begin{eqnarray*}\lim_{n\rightarrow\infty}K^{\fancyA}_{d_{\fancyS},d_{\real}}(\widehat Q_{n+1})\!\!\!\!&\!\leq\!\!\!&\!\!\!\lim_{n\rightarrow\infty}\! K^{\fancyA}_{d_{\fancyS},d_{\real}}(R)\!\sum_{i=0}^{n}\!\big(\gamma K_{d_{\fancyS},W}^{\fancyA}( T)\big)^i\\
&+&\lim_{n\rightarrow\infty}\big(\gamma K_{d_{\fancyS},W}^{\fancyA}(T)\big)^n \ K^{\fancyA}_{d_{\fancyS},d_{\real}}(\widehat Q_{0})\\
&=&\frac{K^{\fancyA}_{d_{\fancyS},d_{R}}(R)}{1-\gamma K_{d_{\fancyS},W}( T)} + 0 \ ,
\end{eqnarray*}
This concludes the proof.\end{proof}
Two implications of this result: First, PAC exploration in continuous state spaces is shown assuming a Lipschitz value function~\cite{pazis2013pac}. However, the theorem shows that it is sufficient to have a Lipschitz model, an assumption perhaps easier to confirm. The second implication relates to value-aware model learning (VAML) objective \citep{farahmand17a}. Using the above theorem, we can show that minimizing Wasserstein is equivalent to minimizing the VAML objective \citep{asadi2018wasserstein}.

\label{sectionLGVI}
\section{Experiments}
Our first goal in this section\footnote{We release the code here: github.com/kavosh8/Lip} is to compare TV, KL, and Wasserstein in terms of the ability to best quantify error of an imperfect model. To this end, we built finite MRPs with random transitions, $|\fancyS|=10$ states, and $\gamma=0.95$. In the first case the reward signal is randomly sampled from $[0,10]$, and in the second case the reward of an state is the index of that state, so small Euclidean norm between two states is an indication of similar values. For $10^5$ trials, we generated an MRP and a random model, and then computed model error and planning error~(Figure \ref{fig:random_MDPs}). We understand a good metric as the one that computes a model error with a high correlation with value error. We show these correlations for different values of $\gamma$ in Figure \ref{fig:random_MDPs_gamma}.

\begin{figure}[h]
\centering
        \begin{subfigure}
        \centering
                \includegraphics[width=0.49\columnwidth,height=95pt]{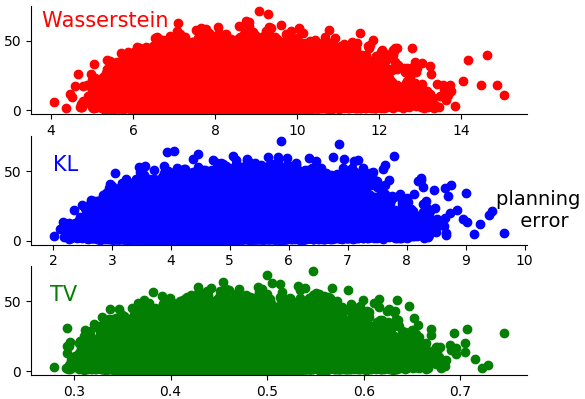}
        \end{subfigure}%
        \begin{subfigure}
        \centering
                \includegraphics[width=.49\columnwidth,height=95pt]{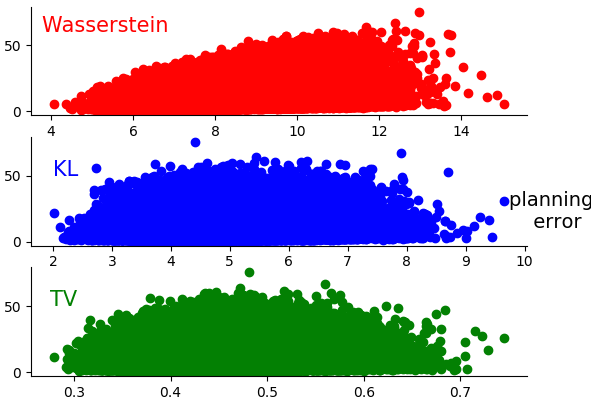}
        \end{subfigure}
        \caption{Value error (x axis) and model error (y axis).  When the reward is the index of the state (right), correlation between Wasserstein error and value-prediction error is high. This highlights the fact that when closeness in the state-space is an indication of similar values, Wasserstein can be a powerful metric for model-based RL. Note that Wasserstein provides no advantage given random rewards~(left).}
        \label{fig:random_MDPs}
\end{figure}
\begin{figure}[h]
\centering
        \begin{subfigure}
        \centering
                \includegraphics[width=0.475\columnwidth]{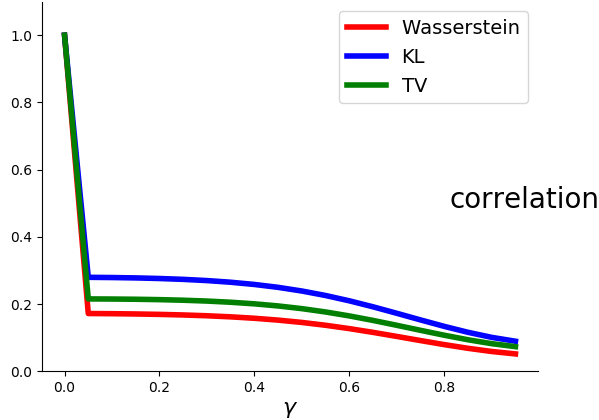}
        \end{subfigure}%
        \begin{subfigure}
        \centering
                \includegraphics[width=.475\columnwidth]{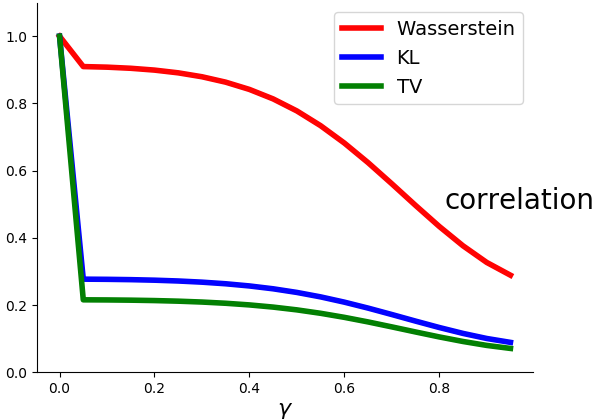}
        \end{subfigure}
        \caption{Correlation between value-prediction error and model error for the three metrics using random rewards (left) and index rewards (right). Given a useful notion of state similarities, low Wasserstein error is a better indication of planning error.}
        \label{fig:random_MDPs_gamma}
\end{figure}
\begin{table*}
    \centering
    \begin{tabular}{|c|c|c|c|c|}
    \hline
        Function $f$ & Definition & \multicolumn{3}{|c|}{Lipschitz constant $K_{\|\|_p, \|\|_p}(f)$}\\
        \hline
          &  & $p=1$ & $p=2$ & $p=\infty$\\
        \hline
         $\ReLu: R^n \rightarrow R^n$ & $\ReLu(x)_i := \max\{0 , x_i\}$ & 1 & 1 & 1\\
         $+b: R^n \rightarrow R^n, \forall b\in R^n$ & $+b(x) := x+ b$ & 1 & 1 & 1\\
         $\times W: R^n \rightarrow R^m, \,\,\forall W\in R^{m\times n}$ & $\times W(x) := Wx$ & $\sum_{j} \|W_j\|_\infty$ & $\sqrt{\sum_j \|W_j\|^2_2}$ & $\sup_j \|W_j\|_1$ \\
    \hline
    \end{tabular}
    \caption{Lipschitz constant for various functions used in a neural network. Here, $W_j$ denotes the $j$th row of a weight matrix $W$.}
    \label{tab:lipschitz-neural}
\end{table*}

It is known that controlling the Lipschitz constant of neural nets can help in terms of improving generalization error due to a lower bound on  Rademacher complexity~ \citep{Neyshabur15,bartlett2002rademacher}.  It then follows from Theorems~\ref{theorem_compound} and~\ref{theorem:value_error} that controlling the Lipschitz constant of a learned transition model can achieve better error bounds for multi-step and value predictions. To enforce this constraint during learning, we bound the Lipschitz constant of various operations used in building neural network. The bound on the constant of the entire neural network then follows from Lemma~\ref{lemma_composition}. In Table~\ref{tab:lipschitz-neural}, we provide Lipschitz constant for operations~(see Appendix for proof) used in our experiments. We quantify these results for different $p$-norms $\norm{\cdot}_{p}$.

Given these simple methods for enforcing Lipschitz continuity, we performed empirical evaluations to understand the impact of Lipschitz continuity of transition models, specifically when the transition model is used to perform multi-step state-predictions and policy improvements. We chose two standard domains: Cart Pole and Pendulum. In Cart Pole, we trained a network on a dataset of $15*10^3$ tuples $\langle s,a,s' \rangle$. During training, we ensured that the weights of the network are smaller than $k$. For each $k$, we performed 20 independent model estimation, and chose the model with median cross-validation error. 

Using the learned model, along with the actual reward signal of the environment, we then performed stochastic actor-critic RL. \citep{barto1983neuronlike,sutton2000policy} This required an interaction between the policy and the learned model for relatively long trajectories. To measure the usefulness of the model, we then tested the learned policy on the actual domain. We repeated this experiment on Pendulum. To train the neural transition model for this domain we used $10^4$ samples. Notably, we used deterministic policy gradient \citep{silver2014deterministic} for training the policy network with the hyper parameters suggested by \citet{lillicrap2015continuous}. We report these results in Figure \ref{fig:cart_and_pendulum}. 

Observe that an intermediate Lipschitz constant yields the best result. Consistent with the theory, controlling the Lipschitz constant in practice can combat the compounding errors and can help in the value estimation problem. This ultimately results in learning a better policy.
\begin{figure}
\centering
        \begin{subfigure}
        \centering
                \includegraphics[width=0.425\columnwidth]{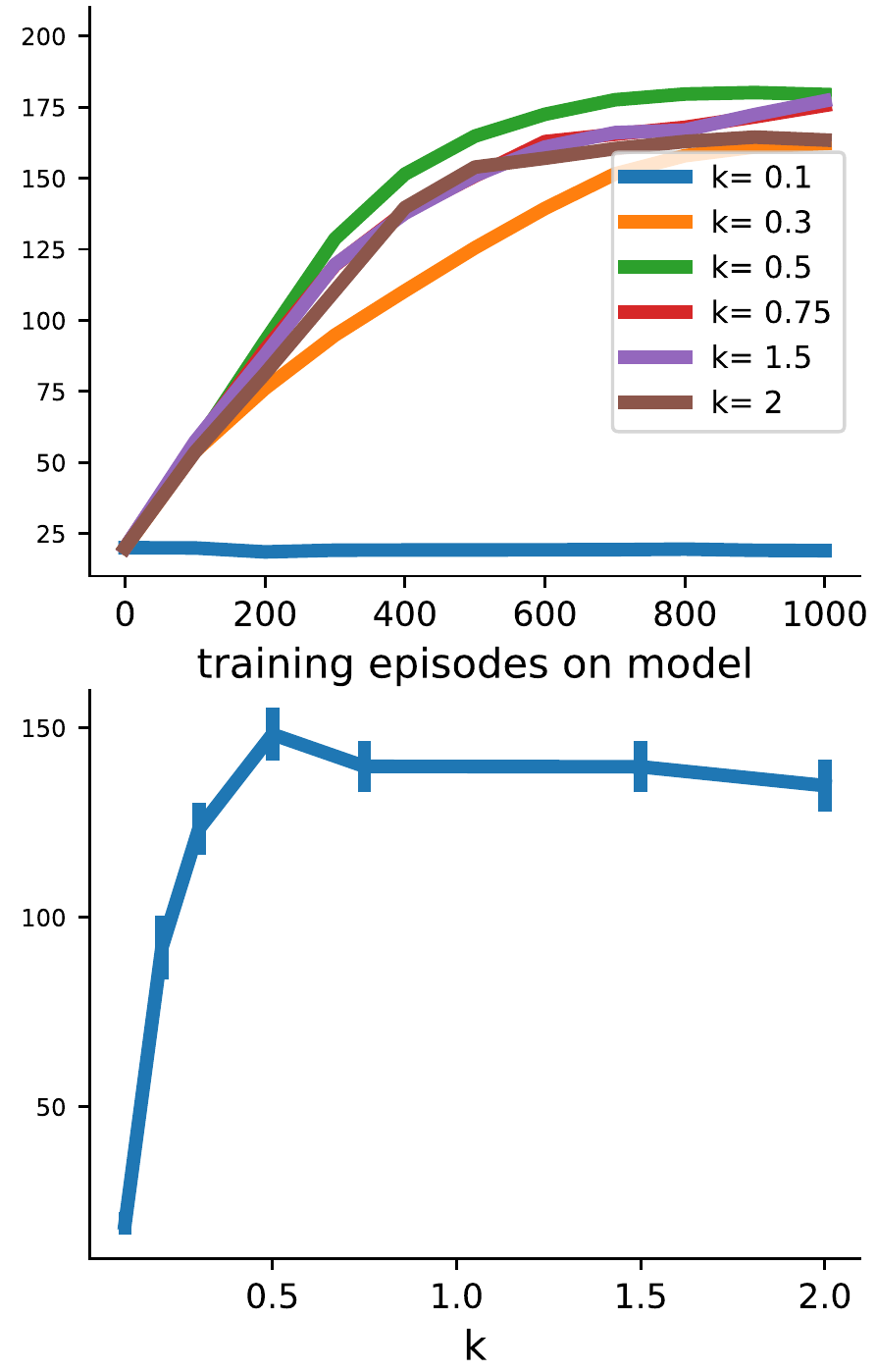}
        \end{subfigure}%
        \hspace{-.25cm}
        \begin{subfigure}
        \centering
                \includegraphics[width=.5\columnwidth]{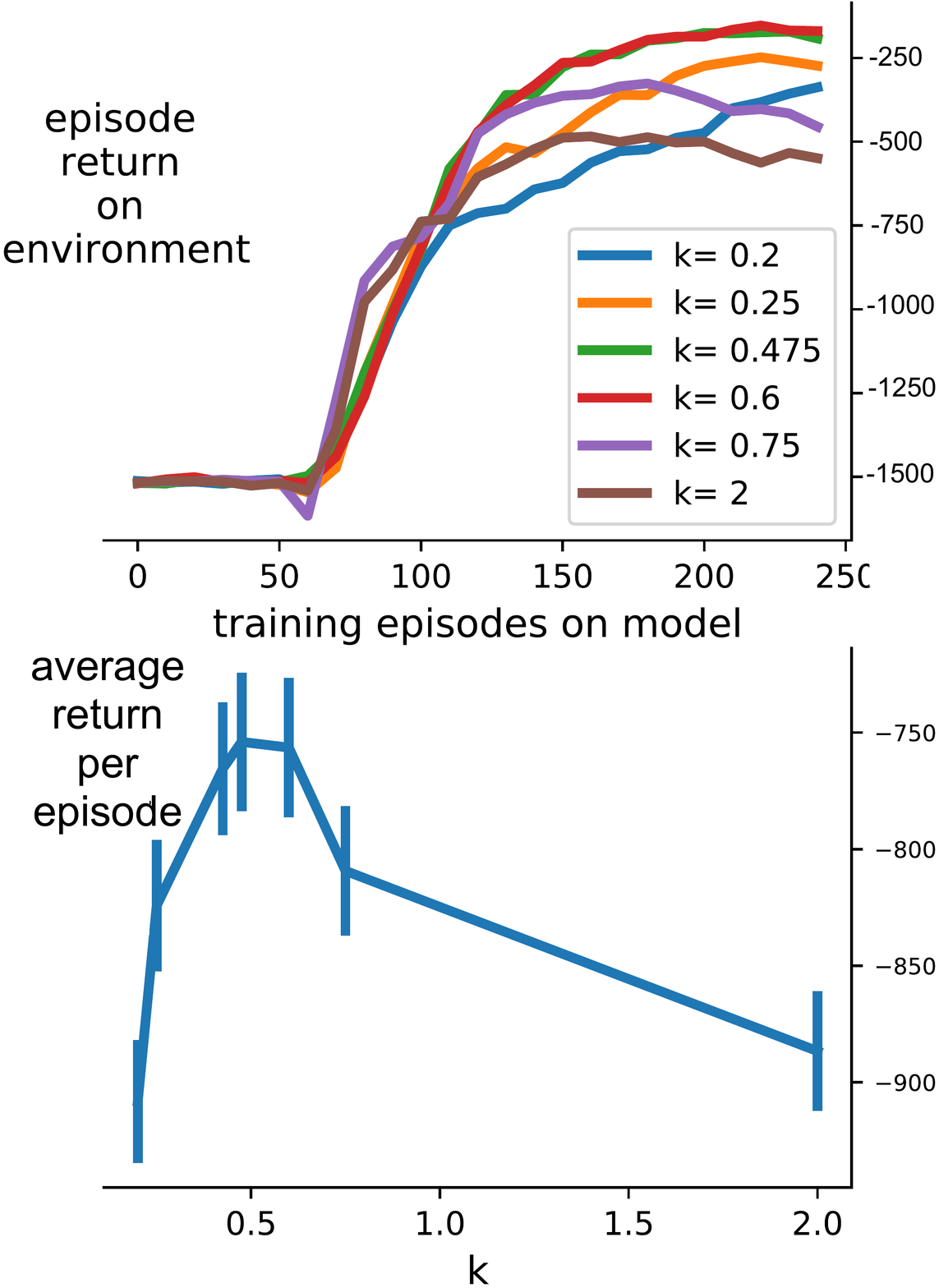}
        \end{subfigure}
        \caption{Impact of Lipschitz constant of learned models in Cart Pole (left) and Pendulum (right). An intermediate value of $k$ (Lipschitz constant) yields the best performance.}
        \label{fig:cart_and_pendulum}
\end{figure}

We next examined if the benefits carry over to stochastic settings. To capture stochasticity we need an algorithm to learn a Lipschitz model class (Definition \ref{Lipschitz_model_class}). We used an EM algorithm to joinly learn a set of functions $f$, parameterized by $\theta\!=\!\{\theta^{f}\!:\!f\in F_{g}\}$, and a distribution over functions $g$. Note that in practice our dataset only consists of a set of samples $\langle s,a,s'\rangle $ and does not include the function the sample is drawn from. Hence, we consider this as our latent variable $z$. As is standard with EM, we start with the log-likelihood objective (for simplicity of presentation we assume a single action in the derivation):
\begin{eqnarray*}
L(\theta)&=&\sum_{i=1}^{N}\log p(s_i,{s_{i}}';\theta)\\
&=&\!\sum_{i=1}^{N}\log \sum_{f} p(z_i=f,s_i,{s_{i}}';\theta)\\
&=&\!\sum_{i=1}^{N}\log \sum_{f}\!q(z_i\!=\!f|s_i,{s_{i}}')\!\frac{p(z_i=f,s_i,{s_{i}}';\theta)}{q(z_i=f|s_i,{s_{i}}')} \\
&\geq&\!\sum_{i=1}^{N}\!\sum_{f}\!q(z_i\!=\!f|s_i,{s_{i}}')\!\log\frac{p(z_i=f,s_i,{s_{i}}';\theta)}{q(z_i=f|s_i,{s_{i}}')}\ ,\\
\end{eqnarray*}
where we used Jensen's inequality and concavity of $\textrm{log}$ in the last line. This derivation leads to the following EM algorithm. 

In the M step, find $\theta_t$ by solving for:
\begin{equation*}
    \!\textrm{arg}\!\max_{\theta}\!\sum_{i=1}^{N}\!\sum_{f}\!q_{t-1}(\!z_i=f|s_i,{s_{i}}')\!\log\!\frac{p(z_i=f,s_i,{s_{i}}';\theta)}{q_{t-1}(\!z_i=f|s_i,{s_{i}}')}
\end{equation*}
In the E step, compute posteriors:
\begin{equation*}
    q_{t}(\!z_i\!=\!f|s_i,{s_{i}}')\!=\!\frac{p(s_i,{s_{i}}'|z_i=f;{\theta_t}^{f})g(z_i=f;\theta_t)}{\sum_{f} p(s_i,{s_{i}}'|z_i=f;{\theta_t}^{f})g(z_i=f;\theta_t)}\ .
\end{equation*}
Note that we assume each point is drawn from a neural network $f$ with probability:
$$p\big(s_i,{s_{i}}'|z_i=f;{\theta_t}^{f}\big)=\mathcal{N}\Big(\big|{s_{i}}'-f(s_i,{\theta_t}^{f})\big|,\sigma^2\Big)\ ,$$and with a fixed variance $\sigma^2$ tuned as a hyper-parameter.

We used a supervised-learning domain to evaluate the EM algorithm. We generated 30 points from 5 functions (written at the end of Appendix) and trained 5 neural networks to fit these points. Iterations of a single run is shown in Figure \ref{fig:EM_evolution} and the summary of results is presented in Figure \ref{fig:deep_em}. Observe that the EM algorithm is effective, and that controlling the Lipschitz constant is again useful.
\begin{figure}
\centering
        \begin{subfigure}
        \centering
                \includegraphics[width=0.45\columnwidth]{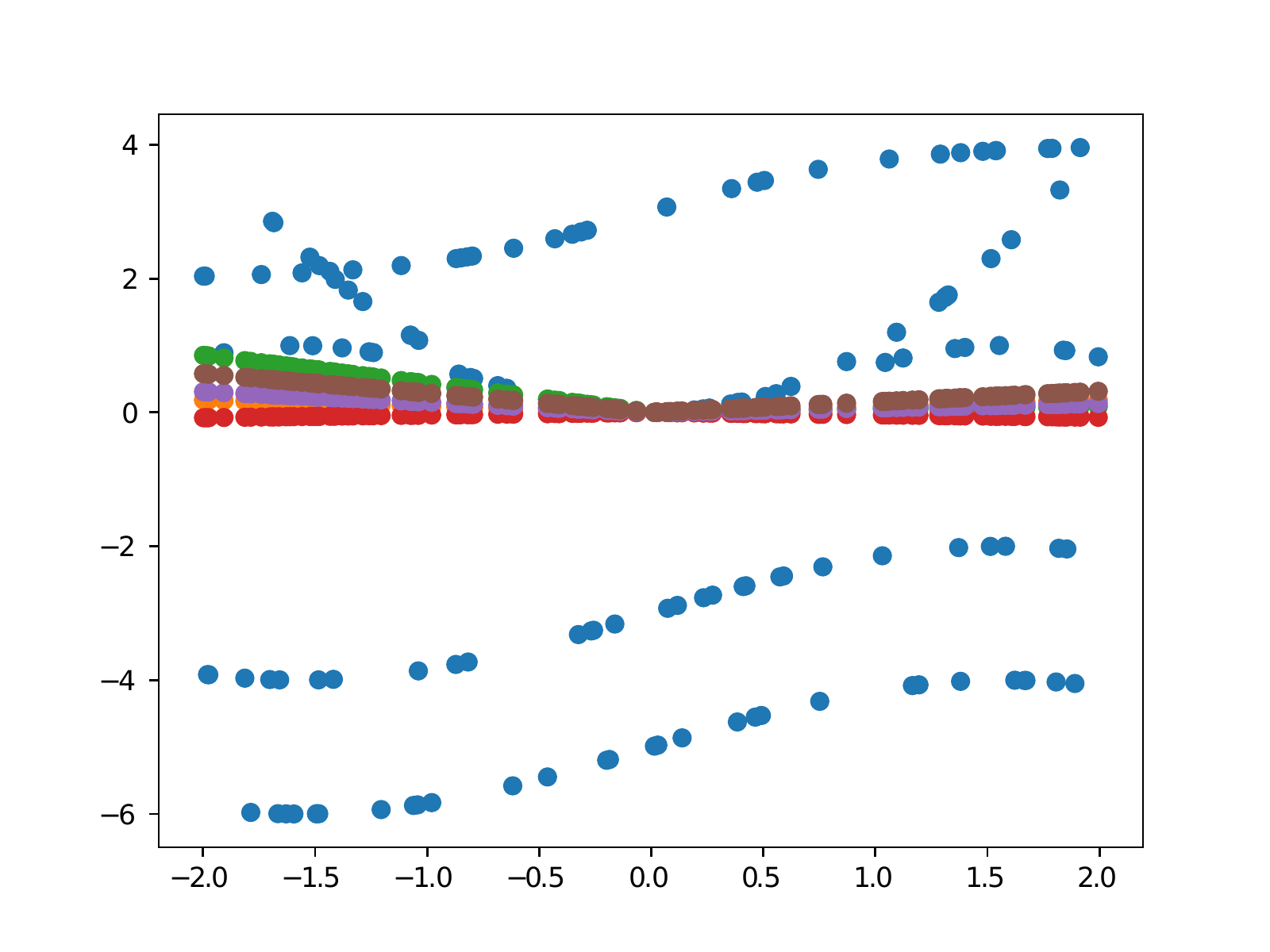}
        \end{subfigure}%
        \begin{subfigure}
        \centering
                \includegraphics[width=.45\columnwidth]{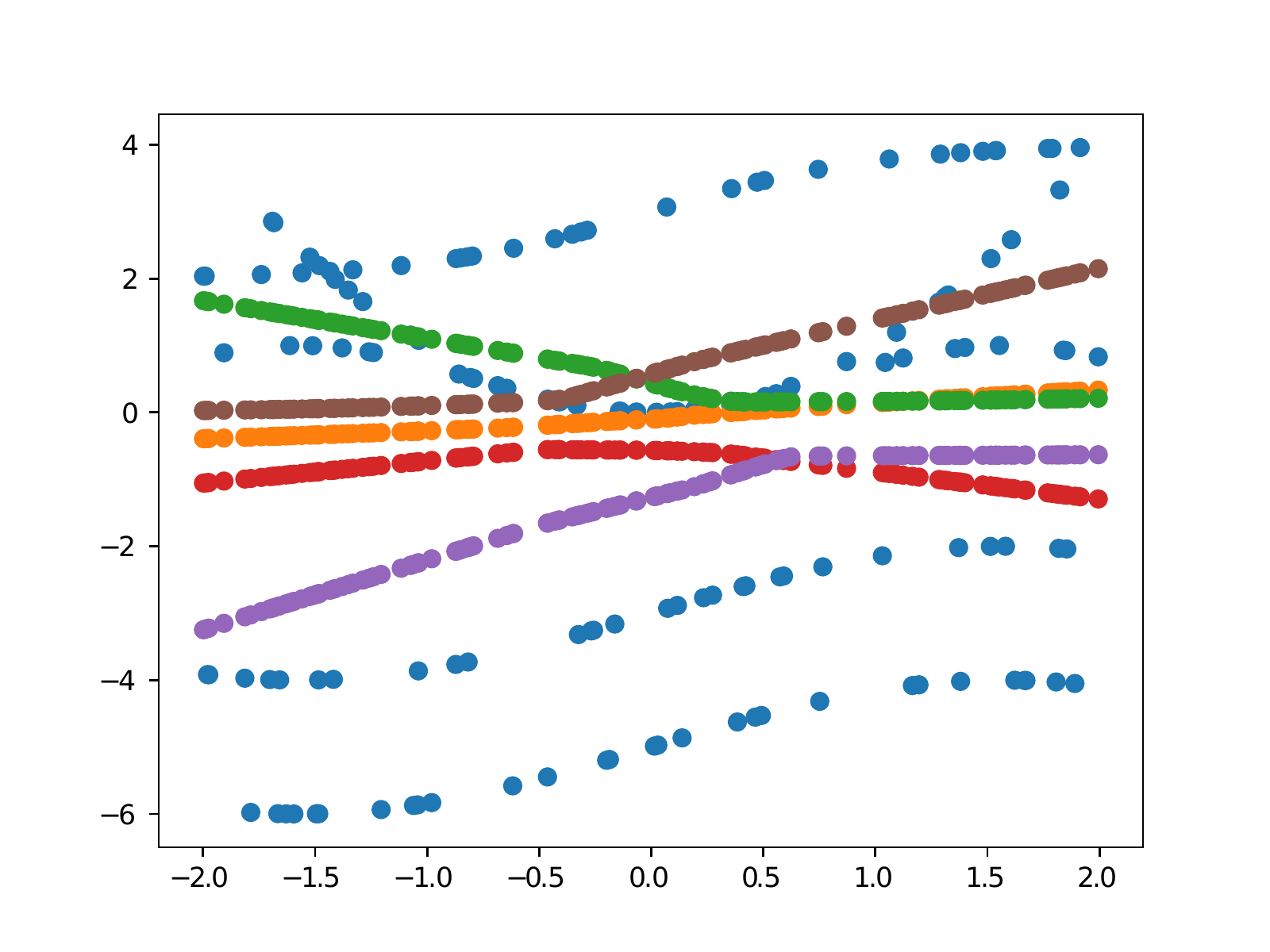}
        \end{subfigure}
        \begin{subfigure}
        \centering
                \includegraphics[width=.45\columnwidth]{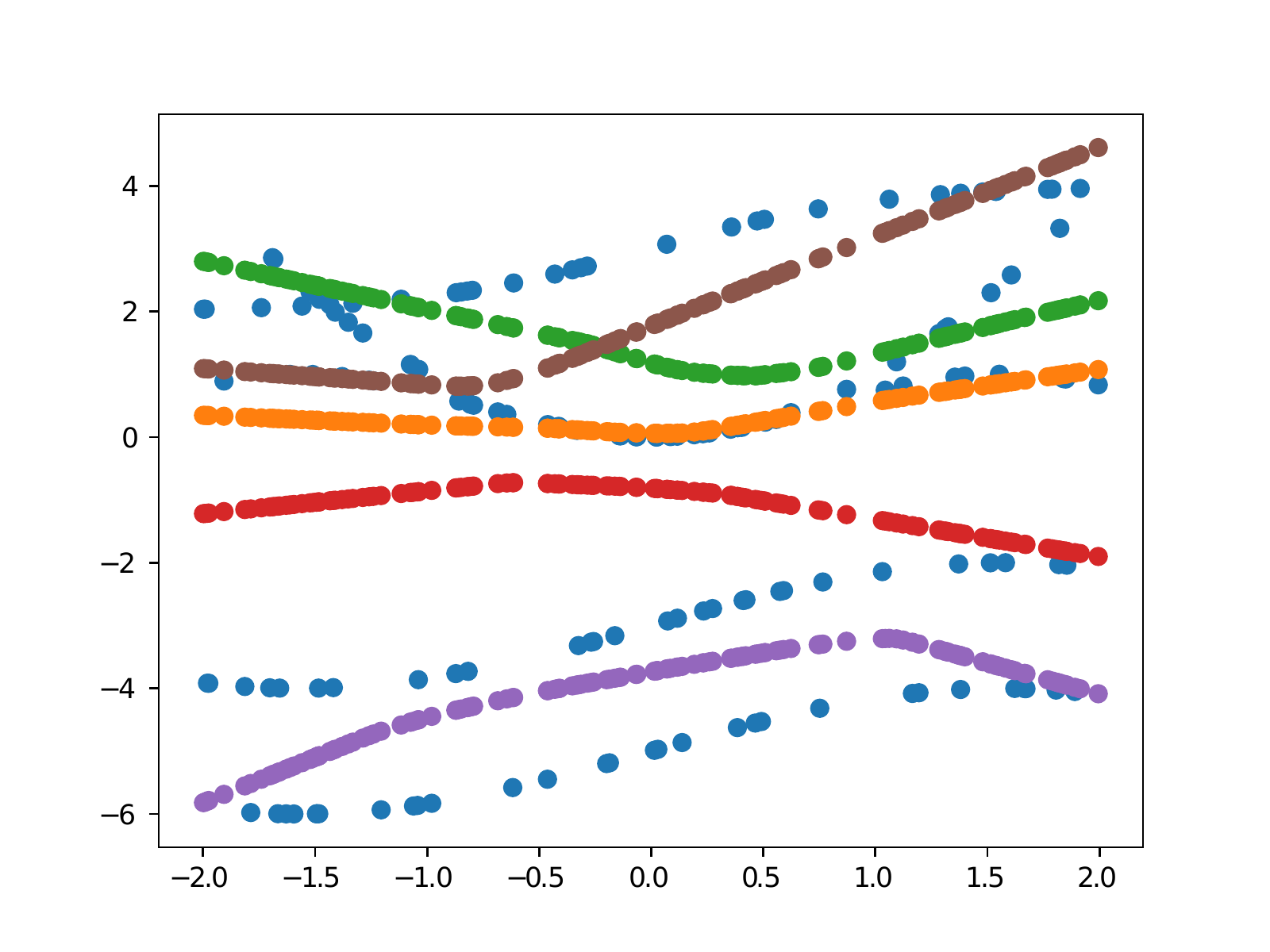}
        \end{subfigure}
        \begin{subfigure}
        \centering
                \includegraphics[width=0.45\columnwidth]{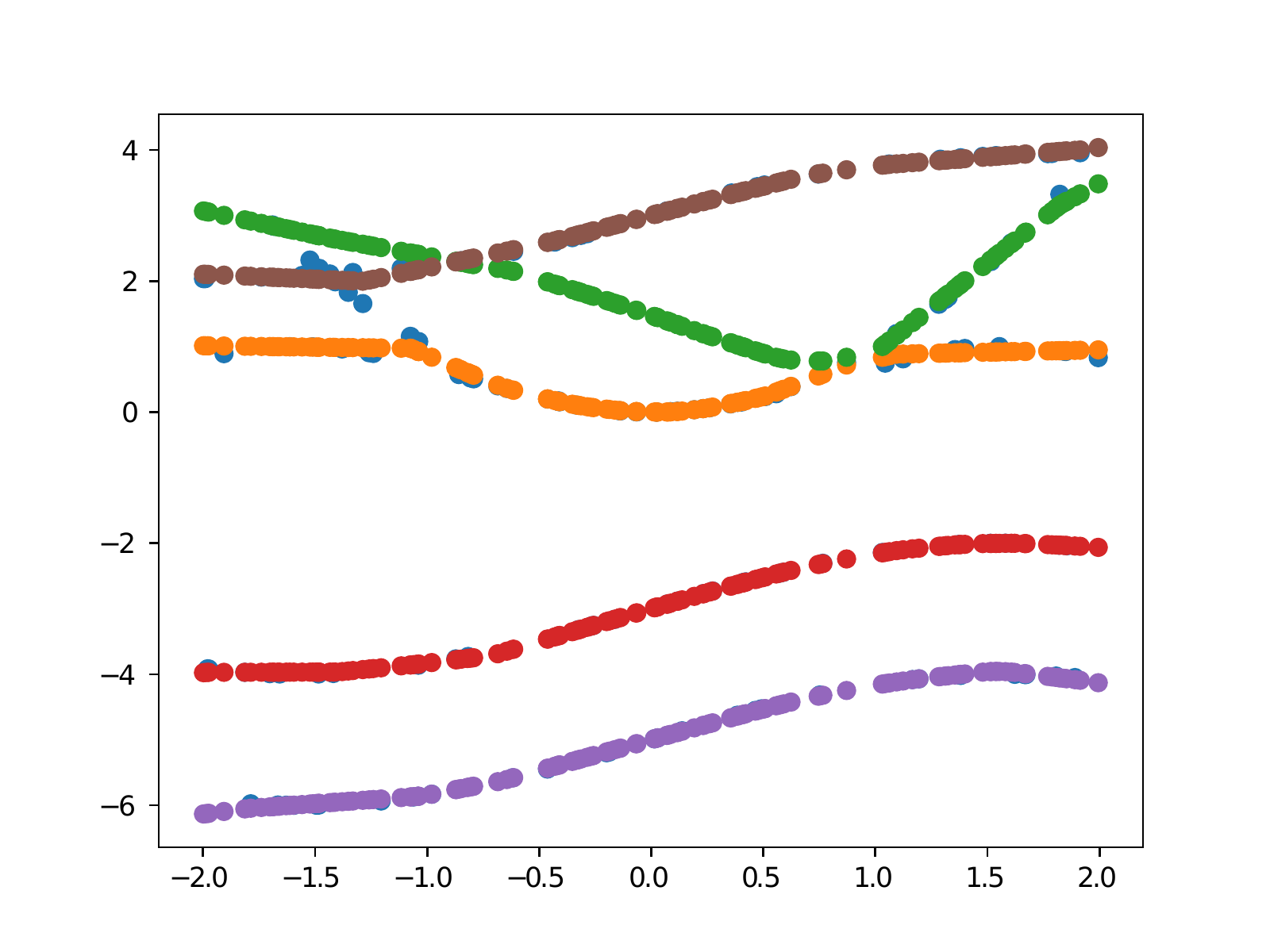}
        \end{subfigure}
        \caption{A stochastic problem solved by training a Lipschitz model class using EM. The top left figure shows the functions before any training (iteration 0), and the bottom right figure shows the final results~(iteration 50).}
        \label{fig:EM_evolution}
\end{figure}
\begin{figure}
        \centering
                \includegraphics[width=0.95\columnwidth]{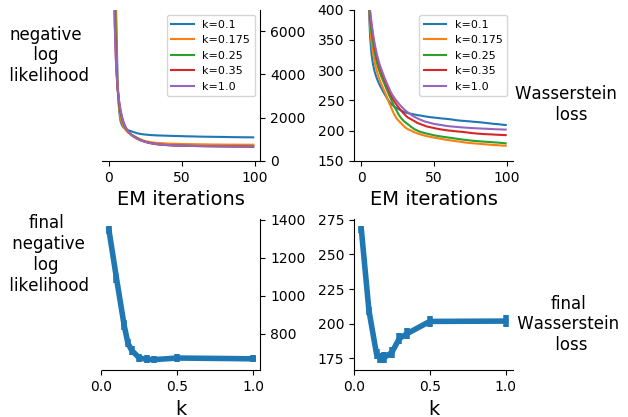}
        \label{fig:deep_em}
        \caption{Impact of controlling the Lipschitz constant in the supervised-learning domain. Notice the U-shape of final Wasserstein loss with respect to Lipschitz constant $k$.}
\end{figure}

We next applied EM to train a transition model for an RL setting, namely the gridworld domain from \citet{moerland2017learning}. Here a useful model needs to capture the stochastic behavior of the two ghosts. We modify the reward to be -1 whenever the agent is in the same cell as either one of the ghosts and 0 otherwise. We performed environmental interactions for 1000 time-steps and measured the return. We compared against standard tabular methods\citep{sutton98}, and a deterministic model that predicts expected next state \citep{linearDyna,parr2008analysis}. In all cases we used value iteration for planning.

Results in Figure \ref{fig:pacman} show that tabular models fail due to no generalization, and expected models fail since the ghosts do not move on expectation, a prediction not useful for planner. Performing value iteration with a Lipschitz model class outperforms the baselines.
\begin{figure}
\centering
        \begin{subfigure}
        \centering
                \includegraphics[width=0.5\columnwidth]{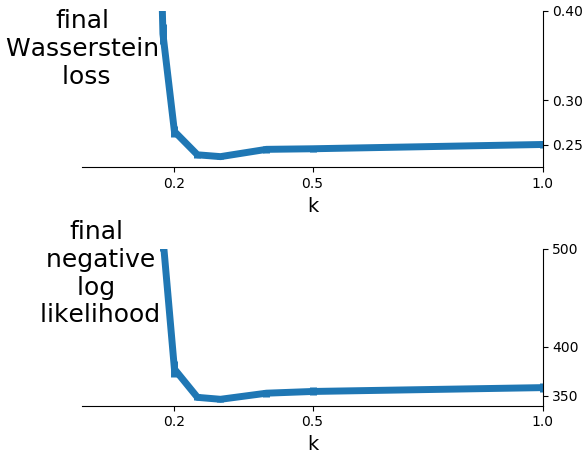}
        \end{subfigure}%
        \hspace{-.35cm}
        \begin{subfigure}
        \centering
                \includegraphics[width=.45\columnwidth]{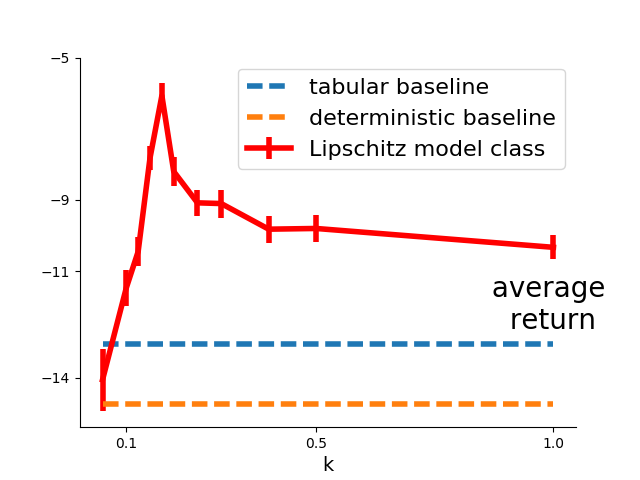}
        \end{subfigure}
        \caption{Performance of a Lipschitz model class on the gridworld domain. We show model test accuracy (left) and quality of the policy found using the model (right). Notice the poor performance of tabular and expected models.}
        \label{fig:pacman}
\end{figure}
\section{Conclusion}
We took an important step towards understanding model-based RL with function approximation. We showed that Lipschitz continuity of an estimated model plays a central role in multi-step prediction error, and in value-estimation error. We also showed the benefits of employing Wasserstein for model-based RL. An important future work is to apply these ideas to larger problems. 
\section{Acknowledgements}
The authors recognize the assistance of Eli Upfal, John Langford, George Konidaris, and members of Brown's Rlab specifically Cameron Allen, David Abel, and Evan Cater. The authors also thank anonymous ICML reviewer 1 for insights on a value-aware interpretation of Wasserstein. 
\bibliography{Lipschitz_paper_refs}
\bibliographystyle{icml2018}

\onecolumn
\appendix
\section*{Appendix}
\begin{claim}\label{claim-finite-mdp}In a finite MDP, transition probabilities can be expressed using a finite set of deterministic functions and a distribution over the functions.
\end{claim}
\begin{proof}
Let $Pr(s,a,s')$ denote the probability of a transiton from $s$ to $s'$ when executing the action $a$. Define an ordering over states $s_1, ... , s_n$ with an additional unreachable state $s_0$. Now define the cumulative probability distribution:
$$C(s,a,s_i):=\sum_{j=0}^i Pr(s,a,s_j)\ .$$
Further define $L$ as the set of distinct entries in $C$:
$$L:=\Big\{C(s,a,s_i) |\quad s \in \fancyS, i \in [0,n]\Big\}\ .$$
Note that, since the MDP is assumed to be finite, then $|L|$ is finite. We sort the values of $L$ and denote, by $c_i$, $i$th smallest value of the set. Note that $c_0=0$ and $c_{|L|}=1$. We now build determinstic set of functions $f_1, ..., f_{|L|}$ as follows:
$\forall i=1\ \textrm{to}\ |L|$ and $\forall j=1\ \textrm{to}\ n$, define $f_{i}(s) = s_j$ if and only if:
  $$C(s,a,s_{j-1}) < c_i \leq C(s,a,s_j)\ .$$ We also define the probability distribution $g$ over $f$ as follows:
  $$g(f_i|a):=c_i-c_{i-1}\ .$$
  Given the functions $f_1, ..., f_{|L|}$ and the distribution $g$, we can now compute the probability of a transition to $s_j$ from $s$ after executing action $a$:
\begin{eqnarray*}
    &&\sum_{i} \mathds{1}(f_{i}(s) = s_j)\ g(f_{i}|a)\\
&&= \sum_{i}\mathds{1}\big(C(s,a,s_{j-1}) < c_i \leq C(s,a,s_j)\big)\ (c_i-c_{i-1})\\
&&=  C(s,a,s_j) - C(s,a,s_{j-1})\\
&&= Pr(s,a,s_j)\ ,
\end{eqnarray*}
where $\mathds{1}$ is a binary function that outputs one if and only if its condition holds. We reconstructed the transition probabilities using distribution $g$ and deterministic functions $f_1, ..., f_{|L|}$.
\end{proof}
\begin{claim}\label{claim-deterministic}Given a deterministic and linear transition model, and a linear reward signal, the bounds provided in Theorems \ref{theorem_compound} and \ref{theorem:value_error} are both tight.
\end{claim}
Assume a linear transition function $T$ defined as:
$$T(s)=K s$$
Assume our learned transition function $\hat{T}$:
$$\hat T(s):=K s+\Delta$$
Note that: $$\max_{s}\big|T(s)-\hat T(s)\big|=\Delta$$
and that:
$$\min\{K_T,K_{\hat T}\}=K$$
First observe that the bound in Theorem 2 is tight for $n=2$:
\begin{eqnarray*}
	\forall s\quad \Big|T\big(T(s)\big)-\hat T\big(\hat T(s)\big)\Big|=\Big|K^2 s-K^2 s +\Delta(1+K)\Big|=\Delta \sum_{i=0}^{1}K^{i}
\end{eqnarray*}
and more generally and after $n$ compositions of the models, denoted by $T^{n}$ and $\hat{T}^{n}$, the following equality holds:
\begin{eqnarray*}
	\forall s\quad \Big|T^{n}(s)-\hat T^{n}(s)\Big|=\Delta \sum_{i=0}^{n-1}K^{i}
\end{eqnarray*}
Lets further assume that the reward is linear:
$$R(s)=K_{R}s$$
Consider the state $s=0$. Note that clearly $v(0)=0$. We now compute the value predicted using $\hat T$, denoted by $\hat v(0)$:
\begin{eqnarray*}
	\hat v(0)&=&R(0)+\gamma R(0+\Delta \sum_{i=0}^{0}K^{i})+\gamma^{2}R(0+\Delta \sum_{i=0}^{1}K^{i})+\gamma^{3}R(0+\Delta \sum_{i=0}^{2}K^{i})+...\\
	&=& 0+\gamma K_{R}\Delta \sum_{i=0}^{0}K^{i}+\gamma^{2}K_{R}\Delta \sum_{i=0}^{1}K^{i})+\gamma^{3}K_{R}\Delta \sum_{i=0}^{2}K^{i}+...\\
	&=&\gamma K_{R}\Delta \sum_{n=0}^{\infty}\gamma^n \sum_{i=0}^{n-1}K^{i}=\frac{\gamma K_R\Delta}{(1-\gamma)(1-\gamma \bar{K})}\ ,
\end{eqnarray*}
and so:
$$|v(0)-\hat v(0)|=\frac{\gamma K_R\Delta}{(1-\gamma)(1-\gamma \bar{K})}$$
Note that this exactly matches the bound derived in our Theorem \ref{theorem:value_error}.
\primelemma*
\begin{proof}

\begin{eqnarray*}
W\big(\widehat T(\cdot\mid \mu_1,a),\widehat T(\cdot\mid \mu_2,a)\big)&:=&\inf_j\int_{s'_1}\int_{s'_2}j(s'_1,s'_2)d(s'_1,s'_2)ds'_1 ds'_2\\
&=&\inf_j\int_{s_1}\int_{s_2}\int_{s'_1}\int_{s'_2}\sum_{f} \mathds{1}\big(f(s_1)=s'_1\wedge f(s_2)=s'_2\big)  j(s_1,s_2,f)d(s'_1,s'_2)ds'_1 ds'_2 ds_1 ds_2\\
&=&\inf_j\int_{s_1}\int_{s_2}\sum_{f}  j(s_1,s_2,f)d\big(f(s_1),f(s_2)\big)ds_1 ds_2\\
&\leq& K_F \inf_j\int_{s_1}\int_{s_2}\sum_{f}  g(f|a)j(s_1,s_2)d(s_1,s_2)ds_1 ds_2\\
&=& K_F\sum_{f}  g(f|a) \inf_j\int_{s_1}\int_{s_2}j(s_1,s_2)d(s_1,s_2)ds_1 ds_2\\
&=&K_F\sum_{f}  g(f|a)W(\mu_1,\mu_2)=K_FW(\mu_1,\mu_2)
\end{eqnarray*}

Dividing by $W(\mu_1,\mu_2)$ and taking $\sup$ over $a,\mu_1,$ and $\mu_2$, we conclude:
$$K^{\fancyA}_{W,W}(\widehat{T})=\sup_{a}\sup_{\mu_1,\mu_2}\frac{W\big(\widehat T(\cdot\mid \mu_1,a),\widehat T(\cdot\mid \mu_2,a)\big)}{W(\mu_1,\mu_2)}\leq K_{F}\ .$$

We can also prove this using the Kantarovich-Rubinstein duality theorem:

For every $\mu_1, \mu_2$, and $a \in \fancyA$ we have:
\begin{eqnarray*}
  W\big(\ThatGn(\cdot\mid \mu_1,a),\ThatGn(\cdot\mid \mu_2,a)\big) &=& \sup_{f:K_{d_\fancyS,\real}(f) \le 1} \int_s \big( \ThatGn(s|\mu_1,a) - \ThatGn(s|\mu_2,a)\big) f(s)ds\\
 &=& \sup_{f:K_{d_\fancyS,\real}(f) \le 1} \int_s \int_{s_0}\Big(  \widehat{T}(s|s_0, a) \mu_1(s_0) - \widehat{T}(s|s_0, a) \mu_2(s_0)\Big) f(s)ds ds_0\\
 &=& \sup_{f:K_{d_\fancyS,\real}(f) \le 1} \int_s \int_{s_0}\widehat{T}(s|s_0, a) \Big(\mu_1(s_0) - \mu_2(s_0)\Big) f(s)ds ds_0\\
  &=& \sup_{f:K_{d_\fancyS,\real}(f) \le 1} \int_s \int_{s_0}\sum_t g(t\mid a) \mathbb{1}\big(t(s_0) = s\big) \Big(\mu_1(s_0) - \mu_2(s_0)\Big) f(s)dsds_0\\
  &=& \sup_{f:K_{d_\fancyS,\real}(f) \le 1}  \sum_{t} g(t\mid a) \int_{s_0} \int_{s} \mathbb{1}\big(t(s_0) = s\big) \big(\mu_1(s_0) - \mu_2(s_0)\big) f(s)dsds_0\\
  &=& \sup_{f:K_{d_\fancyS,\real}(f) \le 1}   \sum_{t} g(t\mid a)  \int_{s_0} \big(\mu_1(s_0) - \mu_2(s_0)\big) f\big(t(s_0)\big)ds_0\\
    &\leq&  \sum_{t} g(t\mid a) \sup_{f:K_{d_\fancyS,\real}(f) \le 1}\int_{s_0} \big(\mu_1(s_0) - \mu_2(s_0)\big) f\big(t(s_0)\big) ds_0\\
    && \textrm{composition of $f, t$ is Lipschitz with constant upper bounded by $K_F$.}\\
    &=&  K_F\sum_{t} g(t\mid a) \sup_{f:K_{d_\fancyS,\real}(f) \le 1}\int_{s_0}\big(\mu_1(s_0) - \mu_2(s_0)\big) \frac{f(t(s_0))}{K_F}ds_0\quad \\
    &\leq&  K_F\sum_{t} g(t\mid a) \sup_{h:K_{d_\fancyS,\real}(h) \le 1}\int_{s_0}\big(\mu_1(s_0) - \mu_2(s_0)\big) h(s_0))ds_0\quad \\
    &=&  K_F \sum_{t} g(t\mid a) W(\mu_1, \mu_2) = K_F W(\mu_1, \mu_2)
\end{eqnarray*}
Again we conclude by dividing by $W(\mu_1,\mu_2)$ and taking $\sup$ over $a,\mu_1,$ and $\mu_2$.
\end{proof}

\compositionlemma*
\begin{proof}
	\begin{eqnarray*}
			K_{d_1,d_3}(h)&=&\sup_{s1,s_2}\frac{d_{3}\Big(f\big(g(s_1)\big),f\big(g(s_2)\big)\Big)}{d_{1}(s_1,s_2)}\\
			&=&\sup_{s_1,s_2}\frac{d_{2}\big(g(s_1),g(s_2)\big)}{d_{1}(s_1,s_2)}\frac{d_{3}\Big(f\big(g(s_1)\big),f\big(g(s_2)\big)\Big)}{d_{2}\big(g(s_1),g(s_2)\big)}\\
			&\leq&\sup_{s_1,s_2}\frac{d_{2}\big(g(s_1),g(s_2)\big)}{d_{1}(s_1,s_2)}
			\sup_{s_1,s_2}\frac{d_{3}\big(f(s_1),f(s_2)\big)}{d_{2}(s_1,s_2)}\\
			&=&K_{d_1,d_2}(g)K_{d_2,d_3}(f).
	\end{eqnarray*}
\end{proof}
\mullerLemma*
\begin{proof}
	\begin{eqnarray*}
	K_{d_\fancyS,d_{\real}}^\fancyA \Big(\int_{s'}\widehat T(s'|s,a)f(s')ds'\Big)&=&\sup_{a}\sup_{s_1,s_2}\frac{|\int_{s'} \big( \widehat T(s'|s_1,a)-\widehat T(s'|s_2,a)\big)f(s')ds'|}{d(s_1,s_2)}\\
	&=&\sup_{a}\sup_{s_1,s_2}\frac{|\int_{s'} \big(\widehat T(s'|s_1,a)-\widehat T(s'|s_2,a)\big)f(s')\frac{K_{d_
	\fancyS,d_\real}(f)}{K_{d_
	\fancyS,d_\real}(f)}ds'|}{d(s_1,s_2)}\\
	&=&K_{d_
	\fancyS,d_\real}(f)\sup_{a}\sup_{s_1,s_2}\frac{|\int_{s'} \big( \widehat T(s'|s_1,a)-\widehat T(s'|s_2,a)\big)\frac{f(s')}{K_{d_
	\fancyS,d_\real}(f)}ds'|}{d(s_1,s_2)}\\
	&\leq&K_{d_
	\fancyS,d_\real}(f)\sup_{a}\sup_{s_1,s_2}\frac{|\sup_{g:K_{d_\fancyS,d_\real}(g)\leq 1}\int_{s'} \big( \widehat T(s'|s_1,a)-\widehat T(s'|s_2,a)\big)g(s')ds'|}{d(s_1,s_2)}\\
	&=&K_{d_
	\fancyS,d_\real}(f)\sup_{a}\sup_{s_1,s_2}\frac{\sup_{g:K_{d_\fancyS,d_\real}(g)\leq 1}\int_{s'} \big( \widehat T(s'|s_1,a)-\widehat T(s'|s_2,a)\big)g(s')ds'}{d(s_1,s_2)}\\
	&=&K_{d_
	\fancyS,d_\real}(f)\sup_{a}\sup_{s_1,s_2}\frac{W\big(\widehat T(\cdot|s_1,a),\widehat T(\cdot|s_2,a)\big)}{d(s_1,s_2)}\\
	&=& K_{d_
	\fancyS,d_\real}(f) K_{d_{\fancyS},W}^{\fancyA}(\widehat T) \ .
	\end{eqnarray*}
\end{proof}
\LipschitzOperators*
\begin{proof}
	1 was proven by \citet{littman1996generalized}, and 2 is proven several times~\cite{fox2016g,mellowmax,nachum2017bridging,neu2017unified}. We focus on proving 3. Define $$\rho(x)_i=\frac{e^{\beta x_i}}{\sum_{i=1}^n e^{\beta x_i}}\ ,$$ and observe that $boltz_{\beta}(x)=x^{\top}\rho(x)$. \citet{gao2017properties} showed that $\rho$ is Lipschitz:
	\begin{equation}\norm{\rho(x_1)-\rho(x_2)}_{2}\leq \beta \norm{x_1-x_2}_{2}\label{gao}\end{equation}
	Using their result, we can further show:
	\begin{eqnarray*}
		&&|\rho(x_1)^{\top}x_1 - \rho(x_2)^{\top}x_2|\\
		&&\leq|\rho(x_1)^{\top}x_1-\rho(x_1)^{\top}x_2| + |\rho(x_1)^{\top}x_2- \rho(x_2)^{\top}x_2|\\
		&&\leq \norm{\rho(x_1)}_{2}\norm{x_1-x_2}_{2}\\
		&&+\norm{x_2}_{2}\norm{\rho(x_1)-\rho(x_2)}_{2}\quad \textrm{(Cauchy-Shwartz)}\\
		&&\leq\norm{\rho(x_1)}_{2}\norm{x_1-x_2}_{2}\\
		&&+\norm{x_2}_{2}\beta\norm{x_1-x_2}_{2}\quad \textrm{\big( from Eqn~\ref{gao})\big)}\\
		&&\leq(1+\beta V_{\max}\sqrt{|A|})\norm{x_1-x_2}_{2}\\
		&&\leq(\sqrt{|A|}+\beta V_{\max}|A|)\norm{x_1-x_2}_{\infty}\ ,
	\end{eqnarray*}
dividing both sides by $\norm{x_1-x_2}_{\infty}$ leads to 3.
\end{proof}
Below, we derive the Lipschitz constant for various functions mentioned in Table~\ref{tab:lipschitz-neural}.\\
\textbf{ReLu non-linearity} We show that $\ReLu: \mathbb{R}^n \rightarrow \mathbb{R}^n$ has Lipschitz constant 1 for $p$. 

\begin{eqnarray*} 
K_{\|.\|_p, \|.\|_p}(\ReLu) &=& \sup_{x_1, x_2} \frac{\|\ReLu(x_1) - \ReLu(x_2)\|_p}{\|x_1 - x_2\|_p}\\
&=& \sup_{x_1, x_2} \frac{(\sum_i |\ReLu(x_1)_i - \ReLu(x_2)_i|^p)^{\frac{1}{p}}}{\|x_1 - x_2\|_p}\\
&& (\textrm{We can show that $|\ReLu(x_1)_i - \ReLu(x_2)_i| \le |x_{1,i} - x_{2,i}|$} \textrm{ and so}):\\
&\le& \sup_{x_1, x_2} \frac{ (\sum_i |x_{1,i} - x_{2,i}|^p)^{\frac{1}{p}}}{\|x_1 - x_2\|_p}\\
&=& \sup_{x_1, x_2} \frac{\|x_1 - x_2\|_p}{\|x_1 - x_2\|_p} = 1 \\
\end{eqnarray*}

\textbf{Matrix multiplication} Let $W \in \mathbb{R}^{n\times m}$. We derive the Lipschitz continuity for the function $\times W (x) = Wx$.

For $p=\infty$ we have:
\begin{eqnarray*}
	&&K_{\norm{}_\infty,\norm{}_\infty}\big(\times W( x_1)\big)\\
	&&=\sup_{x_1,x_2}\frac{\norm{\times W(x_{1})-\times W(x_{2})}_{\infty}}{\norm{x_{1}-x_{2}}_{\infty}}=\sup_{x_1,x_2}\frac{\norm{Wx_1-Wx_2}_{\infty}}{\norm{x_{1}-x_{2}}_{\infty}}
     = \sup_{x_1,x_2}\frac{\norm{W(x_{1}-x_{2})}_{\infty}}{\norm{x_{1}-x_{2}}_{\infty}}\\
	&&=\sup_{x_1,x_2}\frac{\sup_{j}|W_{j}(x_{1}-x_{2})|}{\norm{x_{1}-x_{2}}_{\infty}}\ \\
	&&\leq\sup_{x_1,x_2}\frac{\sup_{j}\norm{W_{j}}\norm{x_{1}-x_{2}}_{\infty}}{\norm{x_{1}-x_{2}}_{\infty}}\quad \textrm{(H\"{o}lder's inequality)}\\
	&&=\sup_{j}\norm{W_{j}}_{1}\ ,
\end{eqnarray*}
where $W_j$ refers to $j$th row of the weight matrix $W$. Similarly, for $p=1$ we have:

\begin{eqnarray*}
	&&K_{\norm{}_1,\norm{}_1}\big(\times W( x_1)\big)\\
	&&=\sup_{x_1,x_2}\frac{\norm{\times W(x_{1})-\times W(x_{2})}_1}{\norm{x_{1}-x_{2}}_1}=\sup_{x_1,x_2}\frac{\norm{Wx_1-Wx_2}_1}{\norm{x_{1}-x_{2}}_1}
     = \sup_{x_1,x_2}\frac{\norm{W(x_{1}-x_{2})}_1}{\norm{x_{1}-x_{2}}_1}\\
	&&= \sup_{x_1,x_2}\frac{\sum_{j}|W_{j}(x_{1}-x_{2})|}{\norm{x_{1}-x_{2}}_{1}}\\
	&&\leq\sup_{x_1,x_2}\frac{\sum_{j}\norm{W_{j}}_{\infty}\norm{x_{1}-x_{2}}_{1}}{\norm{x_{1}-x_{2}}_{1}}=\sum_{j}\norm{W_{j}}_{\infty} \ ,
\end{eqnarray*}
and finally for $p=2$:
\begin{eqnarray*}
	&&K_{\norm{}_2,\norm{}_2}\big(\times W( x_1)\big)\\
	&&=\sup_{x_1,x_2}\frac{\norm{\times W(x_{1})-\times W(x_{2})}_2}{\norm{x_{1}-x_{2}}_2}=\sup_{x_1,x_2}\frac{\norm{Wx_1-Wx_2}_2}{\norm{x_{1}-x_{2}}_2}
     = \sup_{x_1,x_2}\frac{\norm{W(x_{1}-x_{2})}_2}{\norm{x_{1}-x_{2}}_2}\\
	&&=\sup_{x_1,x_2}\frac{\sqrt{\sum_{j}|W_{j}(x_{1}-x_{2})|^2}}{\norm{x_{1}-x_{2}}_{2}}\\
	&&\leq\sup_{x_1,x_2}\frac{\sqrt{\sum_{j}\norm{W_{j}}_{2}^{2}\norm{x_{1}-x_{2}}_{2}^{2}}}{\norm{x_{1}-x_{2}}_{2}}=\sqrt{\sum_{j}\norm{W_{j}}_{2}^{2}} \quad .
\end{eqnarray*}

\paragraph{Vector addition} We show that $+b: \mathbb{R}^n \rightarrow \mathbb{R}^n$ has Lipschitz constant 1 for $p=0, 1, \infty$ for all $b\in \mathbb{R}^n$.

\begin{eqnarray*} 
K_{\|.\|_p, \|.\|_p}(\ReLu) &=& \sup_{x_1, x_2} \frac{\|+b(x_1) - +b(x_2)\|_p}{\|x_1 - x_2\|_p}\\
&=& \sup_{x_1, x_2} \frac{\|(x_1 + b) - (x_2 + b)\|_p}{\|x_1 - x_2\|_p} = \frac{\|x_1 - x_2\|_p}{\|x_1 - x_2\|_p} = 1\\
\end{eqnarray*}
\paragraph{Supervised-learning domain}
We used the following 5 functions to generate the dataset:
\begin{eqnarray*}
f_0(x)&=&\textrm{tanh}(x)+3\\
f_1(x)&=&x*x\\
f_2(x)&=&\textrm{sin}(x)-5\\
f_3(x)&=&\textrm{sin}(x)-3\\
f_4(x)&=&\textrm{sin}(x)*\textrm{sin}(x)\\
\end{eqnarray*}
We sampled each function 30 times, where the input was chosen uniformly randomly from $[-2,2]$ each time.
\end{document}


\onecolumn
\appendix
\section*{Appendix: Lipschitz Continuity in Model-based Reinforcement Learning}

\allowdisplaybreaks

We first restate the core lemmas, theorems, and claims presented in our paper below:

\begin{restatable}{lemma}{primelemma}
	\label{lemma1}
	A generalized transition function $\ThatGn$ induced by a Lipschitz model class $F_g$ is Lipschitz with a constant:
	\begin{eqnarray*}
	K^{\fancyA}_{W,W}(\ThatGn):=\sup_{a}\!\sup_{\mu_1,\mu_2}\!\frac{W\big(\ThatGn(\cdot|\mu_1,a),\ThatGn(\cdot|\mu_2,a)\big)}{W(\mu_1,\mu_2)}\!\leq\! K_{F}
	\end{eqnarray*}
	\label{Lipschit_model_class_lemma}
\end{restatable}
\begin{restatable}{lemma}{compositionlemma}
(Composition Lemma) Define three metric spaces $(M_1,d_1)$, $(M_2,d_2)$, and $(M_3,d_3)$. Define Lipschitz functions $f:M_2\mapsto M_3$ and $g:M_1\mapsto M_2$ with constants $K_{d_2,d_3}(f)$ and $K_{d_1,d_2}(g)$. Then, $h:f \circ g:M_1 \mapsto M_3$ is Lipschitz with constant $K_{d_1,d_3}(h)\leq K_{d_2,d_3}(f) K_{d_1,d_2}(g)$.
\label{lemma_composition}
\end{restatable}

\begin{theorem}
\label{theorem_compound}
Define a $\Delta$-accurate $\ThatGn$ with the Lipschitz constant $K_F$ and an MDP with a Lipschitz transition function $\TGn$ with constant $K_{T}$. Let $\bar K=\min\{K_{F},K_{T}\}$. Then $\forall n\geq 1$:
$$\delta(n):=W\big(\ThatG{n}{\cdot\mid\mu},T^{n}_{\fancyG}(\cdot\mid\mu)\big)\leq \Delta\sum_{i=0}^{n-1} (\bar K)^{i} \ .$$
\end{theorem}

\begin{theorem}
Assume a Lipschitz model class $F_g$ with a $\Delta$-accurate $\widehat{T}$ with $\bar{K} = \min\{K_F, K_T\}$. Further, assume a Lipschitz reward function with constant $K_R=K_{d_\fancyS,\real}(R)$. Then $\forall s\in \fancyS$ and $\bar{K} \in [0, \frac{1}{\gamma})$
$$\big|V_T(s)-V_{\widehat T}(s)\big| \le \frac{\gamma K_R\Delta}{(1-\gamma)(1-\gamma \bar{K})}\ .$$
\label{theorem:value_error}
\end{theorem}
\begin{restatable}{lemma}{mLemma}
	Given a Lipschitz function $f:\fancyS\mapsto \mathbb R$ with constant $K_{d_{\fancyS},d_{\real}}(f)$:
	\begin{equation*}K_{d_\fancyS,d_{\real}}^\fancyA \Big(\int\widehat T(s'|s,a)f(s')ds'\Big)\leq K_{d_{\fancyS},d_{\real}}(f) K^{\fancyA}_{d_\fancyS,W}\big(\widehat T\big)\ .\end{equation*}
\label{lipschitz_transition}
\end{restatable}
\vspace*{-\baselineskip}
\begin{restatable}{lemma}{LipschitzOperators}
	The following operators~\cite{mellowmax} are Lipschitz with constants:
	\begin{enumerate}
		\item $K_{\norm{}_{\infty},d_{R}}(\max(x))=K_{\norm{}_{\infty},d_{R}}\big(\textrm{mean}(x)\big)=K_{\norm{}_{\infty},d_{R}}(\epsilon$-$greedy(x))=1$ 
		\item $K_{\norm{}_{\infty},d_{R}}(mm_\beta(x):=\frac{\log\frac{\sum_{i}e^{\beta x_i}}{n}}{\beta})=1$
		\item $K_{\norm{}_{\infty},d_{R}}(boltz_{\beta}(x):=\frac{\sum_{i=1}^n x_i e^{\beta x_i}}{{\sum_{i=1}^n}e^{\beta}x_i})\leq\sqrt{|A|}+\beta V_{\max}|A|$
	\end{enumerate}
	\label{operators_Lipschitzness}
\end{restatable}
        \begin{theorem}
	For any non-expansion backup operator $f$ outlined in Lemma~\ref{operators_Lipschitzness}, GVI computes a value function with a Lipschitz constant bounded by $\frac{K^{\fancyA}_{d_{\fancyS},d_{R}}(R)}{1-\gamma K_{d_{\fancyS},W}( T)}\ $ if $\gamma K_{d_{\fancyS},W}^{\fancyA}( T)< 1$.
	\label{theorem_lipschitz_q}
\end{theorem}

We now provide proofs of various results mentioned in the paper:

\begin{claim}\label{claim-finite-mdp}In a finite MDP, transition probabilities can be expressed using a finite set of deterministic functions and a distribution over the functions.
\end{claim}
\begin{proof}
Let $Pr(s,a,s')$ denote the probability of a transiton from $s$ to $s'$ when executing the action $a$. Define an ordering over states $s_1, ... , s_n$ with an additional unreachable state $s_0$. Now define the cumulative probability distribution:
$$C(s,a,s_i):=\sum_{j=0}^i Pr(s,a,s_j)\ .$$
Further define $L$ as the set of distinct entries in $C$:
$$L:=\Big\{C(s,a,s_i) |\quad s \in \fancyS, i \in [0,n]\Big\}\ .$$
Note that, since the MDP is assumed to be finite, then $|L|$ is finite. We sort the values of $L$ and denote, by $c_i$, $i$th smallest value of the set. Note that $c_0=0$ and $c_{|L|}=1$. We now build determinstic set of functions $f_1, ..., f_{|L|}$ as follows:
$\forall i=1\ \textrm{to}\ |L|$ and $\forall j=1\ \textrm{to}\ n$, define $f_{i}(s) = s_j$ if and only if:
  $$C(s,a,s_{j-1}) < c_i \leq C(s,a,s_j)\ .$$ We also define the probability distribution $g$ over $f$ as follows:
  $$g(f_i|a):=c_i-c_{i-1}\ .$$
  Given the functions $f_1, ..., f_{|L|}$ and the distribution $g$, we can now compute the probability of a transition to $s_j$ from $s$ after executing action $a$:
\begin{eqnarray*}
    &&\sum_{i} \mathds{1}(f_{i}(s) = s_j)\ g(f_{i}|a)\\
&&= \sum_{i}\mathds{1}\big(C(s,a,s_{j-1}) < c_i \leq C(s,a,s_j)\big)\ (c_i-c_{i-1})\\
&&=  C(s,a,s_j) - C(s,a,s_{j-1})\\
&&= Pr(s,a,s_j)\ ,
\end{eqnarray*}
where $\mathds{1}$ is a binary function that outputs one if and only if its condition holds. We reconstructed the transition probabilities using distribution $g$ and deterministic functions $f_1, ..., f_{|L|}$.
\end{proof}
\begin{claim}\label{claim-deterministic}Given a deterministic and linear transition model, and a linear reward signal, the bounds provided in Theorems \ref{theorem_compound} and \ref{theorem:value_error} are both tight.
\end{claim}
Assume a linear transition function $T$ defined as:
$$T(s)=K s$$
Assume our learned transition function $\hat{T}$:
$$\hat T(s):=K s+\Delta$$
Note that: $$\max_{s}\big|T(s)-\hat T(s)\big|=\Delta$$
and that:
$$\min\{K_T,K_{\hat T}\}=K$$
First observe that the bound in Theorem 2 is tight for $n=2$:
\begin{eqnarray*}
	\forall s\quad \Big|T\big(T(s)\big)-\hat T\big(\hat T(s)\big)\Big|=\Big|K^2 s-K^2 s +\Delta(1+K)\Big|=\Delta \sum_{i=0}^{1}K^{i}
\end{eqnarray*}
and more generally and after $n$ compositions of the models, denoted by $T^{n}$ and $\hat{T}^{n}$, the following equality holds:
\begin{eqnarray*}
	\forall s\quad \Big|T^{n}(s)-\hat T^{n}(s)\Big|=\Delta \sum_{i=0}^{n-1}K^{i}
\end{eqnarray*}
Lets further assume that the reward is linear:
$$R(s)=K_{R}s$$
Consider the state $s=0$. Note that clearly $v(0)=0$. We now compute the value predicted using $\hat T$, denoted by $\hat v(0)$:
\begin{eqnarray*}
	\hat v(0)&=&R(0)+\gamma R(0+\Delta \sum_{i=0}^{0}K^{i})+\gamma^{2}R(0+\Delta \sum_{i=0}^{1}K^{i})+\gamma^{3}R(0+\Delta \sum_{i=0}^{2}K^{i})+...\\
	&=& 0+\gamma K_{R}\Delta \sum_{i=0}^{0}K^{i}+\gamma^{2}K_{R}\Delta \sum_{i=0}^{1}K^{i})+\gamma^{3}K_{R}\Delta \sum_{i=0}^{2}K^{i}+...\\
	&=&\gamma K_{R}\Delta \sum_{n=0}^{\infty}\gamma^n \sum_{i=0}^{n-1}K^{i}=\frac{\gamma K_R\Delta}{(1-\gamma)(1-\gamma \bar{K})}\ ,
\end{eqnarray*}
and so:
$$|v(0)-\hat v(0)|=\frac{\gamma K_R\Delta}{(1-\gamma)(1-\gamma \bar{K})}$$
Note that this exactly matches the bound derived in our Theorem \ref{theorem:value_error}.
\primelemma*
\begin{proof}

\begin{eqnarray*}
W\big(\widehat T(\cdot\mid \mu_1,a),\widehat T(\cdot\mid \mu_2,a)\big)&:=&\inf_j\int_{s'_1}\int_{s'_2}j(s'_1,s'_2)d(s'_1,s'_2)ds'_1 ds'_2\\
&=&\inf_j\int_{s_1}\int_{s_2}\int_{s'_1}\int_{s'_2}\sum_{f} \mathds{1}\big(f(s_1)=s'_1\wedge f(s_2)=s'_2\big)  j(s_1,s_2,f)d(s'_1,s'_2)ds'_1 ds'_2 ds_1 ds_2\\
&=&\inf_j\int_{s_1}\int_{s_2}\sum_{f}  j(s_1,s_2,f)d\big(f(s_1),f(s_2)\big)ds_1 ds_2\\
&\leq& K_F \inf_j\int_{s_1}\int_{s_2}\sum_{f}  g(f|a)j(s_1,s_2)d(s_1,s_2)ds_1 ds_2\\
&=& K_F\sum_{f}  g(f|a) \inf_j\int_{s_1}\int_{s_2}j(s_1,s_2)d(s_1,s_2)ds_1 ds_2\\
&=&K_F\sum_{f}  g(f|a)W(\mu_1,\mu_2)=K_FW(\mu_1,\mu_2)
\end{eqnarray*}

Dividing by $W(\mu_1,\mu_2)$ and taking $\sup$ over $a,\mu_1,$ and $\mu_2$, we conclude:
$$K^{\fancyA}_{W,W}(\widehat{T})=\sup_{a}\sup_{\mu_1,\mu_2}\frac{W\big(\widehat T(\cdot\mid \mu_1,a),\widehat T(\cdot\mid \mu_2,a)\big)}{W(\mu_1,\mu_2)}\leq K_{F}\ .$$

We can also prove this using the Kantarovich-Rubinstein duality theorem:

For every $\mu_1, \mu_2$, and $a \in \fancyA$ we have:
\begin{eqnarray*}
  W\big(\ThatGn(\cdot\mid \mu_1,a),\ThatGn(\cdot\mid \mu_2,a)\big) &=& \sup_{f:K_{d_\fancyS,\real}(f) \le 1} \int_s \big( \ThatGn(s|\mu_1,a) - \ThatGn(s|\mu_2,a)\big) f(s)ds\\
 &=& \sup_{f:K_{d_\fancyS,\real}(f) \le 1} \int_s \int_{s_0}\Big(  \widehat{T}(s|s_0, a) \mu_1(s_0) - \widehat{T}(s|s_0, a) \mu_2(s_0)\Big) f(s)ds ds_0\\
 &=& \sup_{f:K_{d_\fancyS,\real}(f) \le 1} \int_s \int_{s_0}\widehat{T}(s|s_0, a) \Big(\mu_1(s_0) - \mu_2(s_0)\Big) f(s)ds ds_0\\
  &=& \sup_{f:K_{d_\fancyS,\real}(f) \le 1} \int_s \int_{s_0}\sum_t g(t\mid a) \mathbb{1}\big(t(s_0) = s\big) \Big(\mu_1(s_0) - \mu_2(s_0)\Big) f(s)dsds_0\\
  &=& \sup_{f:K_{d_\fancyS,\real}(f) \le 1}  \sum_{t} g(t\mid a) \int_{s_0} \int_{s} \mathbb{1}\big(t(s_0) = s\big) \big(\mu_1(s_0) - \mu_2(s_0)\big) f(s)dsds_0\\
  &=& \sup_{f:K_{d_\fancyS,\real}(f) \le 1}   \sum_{t} g(t\mid a)  \int_{s_0} \big(\mu_1(s_0) - \mu_2(s_0)\big) f\big(t(s_0)\big)ds_0\\
    &\leq&  \sum_{t} g(t\mid a) \sup_{f:K_{d_\fancyS,\real}(f) \le 1}\int_{s_0} \big(\mu_1(s_0) - \mu_2(s_0)\big) f\big(t(s_0)\big) ds_0\\
    && \textrm{composition of $f, t$ is Lipschitz with constant upper bounded by $K_F$.}\\
    &=&  K_F\sum_{t} g(t\mid a) \sup_{f:K_{d_\fancyS,\real}(f) \le 1}\int_{s_0}\big(\mu_1(s_0) - \mu_2(s_0)\big) \frac{f(t(s_0))}{K_F}ds_0\quad \\
    &\leq&  K_F\sum_{t} g(t\mid a) \sup_{h:K_{d_\fancyS,\real}(h) \le 1}\int_{s_0}\big(\mu_1(s_0) - \mu_2(s_0)\big) h(s_0))ds_0\quad \\
    &=&  K_F \sum_{t} g(t\mid a) W(\mu_1, \mu_2) = K_F W(\mu_1, \mu_2)
\end{eqnarray*}
Again we conclude by dividing by $W(\mu_1,\mu_2)$ and taking $\sup$ over $a,\mu_1,$ and $\mu_2$.
\end{proof}

\compositionlemma*
\begin{proof}
	\begin{eqnarray*}
			K_{d_1,d_3}(h)&=&\sup_{s1,s_2}\frac{d_{3}\Big(f\big(g(s_1)\big),f\big(g(s_2)\big)\Big)}{d_{1}(s_1,s_2)}\\
			&=&\sup_{s_1,s_2}\frac{d_{2}\big(g(s_1),g(s_2)\big)}{d_{1}(s_1,s_2)}\frac{d_{3}\Big(f\big(g(s_1)\big),f\big(g(s_2)\big)\Big)}{d_{2}\big(g(s_1),g(s_2)\big)}\\
			&\leq&\sup_{s_1,s_2}\frac{d_{2}\big(g(s_1),g(s_2)\big)}{d_{1}(s_1,s_2)}
			\sup_{s_1,s_2}\frac{d_{3}\big(f(s_1),f(s_2)\big)}{d_{2}(s_1,s_2)}\\
			&=&K_{d_1,d_2}(g)K_{d_2,d_3}(f).
	\end{eqnarray*}
\end{proof}
\mLemma*
\begin{proof}
	\begin{eqnarray*}
	K_{d_\fancyS,d_{\real}}^\fancyA \Big(\int_{s'}\widehat T(s'|s,a)f(s')ds'\Big)&=&\sup_{a}\sup_{s_1,s_2}\frac{|\int_{s'} \big( \widehat T(s'|s_1,a)-\widehat T(s'|s_2,a)\big)f(s')ds'|}{d(s_1,s_2)}\\
	&=&\sup_{a}\sup_{s_1,s_2}\frac{|\int_{s'} \big(\widehat T(s'|s_1,a)-\widehat T(s'|s_2,a)\big)f(s')\frac{K_{d_
	\fancyS,d_\real}(f)}{K_{d_
	\fancyS,d_\real}(f)}ds'|}{d(s_1,s_2)}\\
	&=&K_{d_
	\fancyS,d_\real}(f)\sup_{a}\sup_{s_1,s_2}\frac{|\int_{s'} \big( \widehat T(s'|s_1,a)-\widehat T(s'|s_2,a)\big)\frac{f(s')}{K_{d_
	\fancyS,d_\real}(f)}ds'|}{d(s_1,s_2)}\\
	&\leq&K_{d_
	\fancyS,d_\real}(f)\sup_{a}\sup_{s_1,s_2}\frac{|\sup_{g:K_{d_\fancyS,d_\real}(g)\leq 1}\int_{s'} \big( \widehat T(s'|s_1,a)-\widehat T(s'|s_2,a)\big)g(s')ds'|}{d(s_1,s_2)}\\
	&=&K_{d_
	\fancyS,d_\real}(f)\sup_{a}\sup_{s_1,s_2}\frac{\sup_{g:K_{d_\fancyS,d_\real}(g)\leq 1}\int_{s'} \big( \widehat T(s'|s_1,a)-\widehat T(s'|s_2,a)\big)g(s')ds'}{d(s_1,s_2)}\\
	&=&K_{d_
	\fancyS,d_\real}(f)\sup_{a}\sup_{s_1,s_2}\frac{W\big(\widehat T(\cdot|s_1,a), \widehat T(\cdot|s_2,a)\big)}{d(s_1,s_2)}\\
	&=& K_{d_
	\fancyS,d_\real}(f) K_{d_{\fancyS},W}^{\fancyA}(\widehat T) \ .
	\end{eqnarray*}
\end{proof}
\LipschitzOperators*
\begin{proof}
	1 was proven by \citet{littman1996generalized}, and 2 is proven several times~\cite{fox2016g,mellowmax,nachum2017bridging,neu2017unified}. We focus on proving 3. Define $$\rho(x)_i=\frac{e^{\beta x_i}}{\sum_{i=1}^n e^{\beta x_i}}\ ,$$ and observe that $boltz_{\beta}(x)=x^{\top}\rho(x)$. \citet{gao2017properties} showed that $\rho$ is Lipschitz:
	\begin{equation}\norm{\rho(x_1)-\rho(x_2)}_{2}\leq \beta \norm{x_1-x_2}_{2}\label{gao}\end{equation}
	Using their result, we can further show:
	\begin{eqnarray*}
		&&|\rho(x_1)^{\top}x_1 - \rho(x_2)^{\top}x_2|\\
		&&\leq|\rho(x_1)^{\top}x_1-\rho(x_1)^{\top}x_2| + |\rho(x_1)^{\top}x_2- \rho(x_2)^{\top}x_2|\\
		&&\leq \norm{\rho(x_1)}_{2}\norm{x_1-x_2}_{2}\\
		&&+\norm{x_2}_{2}\norm{\rho(x_1)-\rho(x_2)}_{2}\quad \textrm{(Cauchy-Shwartz)}\\
		&&\leq\norm{\rho(x_1)}_{2}\norm{x_1-x_2}_{2}\\
		&&+\norm{x_2}_{2}\beta\norm{x_1-x_2}_{2}\quad \textrm{\big( from Eqn~\ref{gao})\big)}\\
		&&\leq(1+\beta V_{\max}\sqrt{|A|})\norm{x_1-x_2}_{2}\\
		&&\leq(\sqrt{|A|}+\beta V_{\max}|A|)\norm{x_1-x_2}_{\infty}\ ,
	\end{eqnarray*}
dividing both sides by $\norm{x_1-x_2}_{\infty}$ leads to 3.
\end{proof}
Below, we derive the Lipschitz constant for various functions.\\
\textbf{ReLu non-linearity} We show that $\ReLu: \mathbb{R}^n \rightarrow \mathbb{R}^n$ has Lipschitz constant 1 for $p$. 

\begin{eqnarray*} 
K_{\|.\|_p, \|.\|_p}(\ReLu) &=& \sup_{x_1, x_2} \frac{\|\ReLu(x_1) - \ReLu(x_2)\|_p}{\|x_1 - x_2\|_p}\\
&=& \sup_{x_1, x_2} \frac{(\sum_i |\ReLu(x_1)_i - \ReLu(x_2)_i|^p)^{\frac{1}{p}}}{\|x_1 - x_2\|_p}\\
&& (\textrm{We can show that $|\ReLu(x_1)_i - \ReLu(x_2)_i| \le |x_{1,i} - x_{2,i}|$} \textrm{ and so}):\\
&\le& \sup_{x_1, x_2} \frac{ (\sum_i |x_{1,i} - x_{2,i}|^p)^{\frac{1}{p}}}{\|x_1 - x_2\|_p}\\
&=& \sup_{x_1, x_2} \frac{\|x_1 - x_2\|_p}{\|x_1 - x_2\|_p} = 1 \\
\end{eqnarray*}

\textbf{Matrix multiplication} Let $W \in \mathbb{R}^{n\times m}$. We derive the Lipschitz continuity for the function $\times W (x) = Wx$.

For $p=\infty$ we have:
\begin{eqnarray*}
	&&K_{\norm{}_\infty,\norm{}_\infty}\big(\times W( x_1)\big)\\
	&&=\sup_{x_1,x_2}\frac{\norm{\times W(x_{1})-\times W(x_{2})}_{\infty}}{\norm{x_{1}-x_{2}}_{\infty}}=\sup_{x_1,x_2}\frac{\norm{Wx_1-Wx_2}_{\infty}}{\norm{x_{1}-x_{2}}_{\infty}}
     = \sup_{x_1,x_2}\frac{\norm{W(x_{1}-x_{2})}_{\infty}}{\norm{x_{1}-x_{2}}_{\infty}}\\
	&&=\sup_{x_1,x_2}\frac{\sup_{j}|W_{j}(x_{1}-x_{2})|}{\norm{x_{1}-x_{2}}_{\infty}}\ \\
	&&\leq\sup_{x_1,x_2}\frac{\sup_{j}\norm{W_{j}}\norm{x_{1}-x_{2}}_{\infty}}{\norm{x_{1}-x_{2}}_{\infty}}\quad \textrm{(H\"{o}lder's inequality)}\\
	&&=\sup_{j}\norm{W_{j}}_{1}\ ,
\end{eqnarray*}
where $W_j$ refers to $j$th row of the weight matrix $W$. Similarly, for $p=1$ we have:

\begin{eqnarray*}
	&&K_{\norm{}_1,\norm{}_1}\big(\times W( x_1)\big)\\
	&&=\sup_{x_1,x_2}\frac{\norm{\times W(x_{1})-\times W(x_{2})}_1}{\norm{x_{1}-x_{2}}_1}=\sup_{x_1,x_2}\frac{\norm{Wx_1-Wx_2}_1}{\norm{x_{1}-x_{2}}_1}
     = \sup_{x_1,x_2}\frac{\norm{W(x_{1}-x_{2})}_1}{\norm{x_{1}-x_{2}}_1}\\
	&&= \sup_{x_1,x_2}\frac{\sum_{j}|W_{j}(x_{1}-x_{2})|}{\norm{x_{1}-x_{2}}_{1}}\\
	&&\leq\sup_{x_1,x_2}\frac{\sum_{j}\norm{W_{j}}_{\infty}\norm{x_{1}-x_{2}}_{1}}{\norm{x_{1}-x_{2}}_{1}}=\sum_{j}\norm{W_{j}}_{\infty} \ ,
\end{eqnarray*}
and finally for $p=2$:
\begin{eqnarray*}
	&&K_{\norm{}_2,\norm{}_2}\big(\times W( x_1)\big)\\
	&&=\sup_{x_1,x_2}\frac{\norm{\times W(x_{1})-\times W(x_{2})}_2}{\norm{x_{1}-x_{2}}_2}=\sup_{x_1,x_2}\frac{\norm{Wx_1-Wx_2}_2}{\norm{x_{1}-x_{2}}_2}
     = \sup_{x_1,x_2}\frac{\norm{W(x_{1}-x_{2})}_2}{\norm{x_{1}-x_{2}}_2}\\
	&&=\sup_{x_1,x_2}\frac{\sqrt{\sum_{j}|W_{j}(x_{1}-x_{2})|^2}}{\norm{x_{1}-x_{2}}_{2}}\\
	&&\leq\sup_{x_1,x_2}\frac{\sqrt{\sum_{j}\norm{W_{j}}_{2}^{2}\norm{x_{1}-x_{2}}_{2}^{2}}}{\norm{x_{1}-x_{2}}_{2}}=\sqrt{\sum_{j}\norm{W_{j}}_{2}^{2}} \quad .
\end{eqnarray*}

\paragraph{Vector addition} We show that $+b: \mathbb{R}^n \rightarrow \mathbb{R}^n$ has Lipschitz constant 1 for $p=0, 1, \infty$ for all $b\in \mathbb{R}^n$.

\begin{eqnarray*} 
K_{\|.\|_p, \|.\|_p}(\ReLu) &=& \sup_{x_1, x_2} \frac{\|+b(x_1) - +b(x_2)\|_p}{\|x_1 - x_2\|_p}\\
&=& \sup_{x_1, x_2} \frac{\|(x_1 + b) - (x_2 + b)\|_p}{\|x_1 - x_2\|_p} = \frac{\|x_1 - x_2\|_p}{\|x_1 - x_2\|_p} = 1\\
\end{eqnarray*}
\paragraph{Supervised-learning domain}
We used the following 5 functions to generate the dataset:
\begin{eqnarray*}
f_0(x)&=&\textrm{tanh}(x)+3\\
f_1(x)&=&x*x\\
f_2(x)&=&\textrm{sin}(x)-5\\
f_3(x)&=&\textrm{sin}(x)-3\\
f_4(x)&=&\textrm{sin}(x)*\textrm{sin}(x)\\
\end{eqnarray*}
We sampled each function 30 times, where the input was chosen uniformly randomly from $[-2,2]$ each time.

\bibliography{Lipschitz_paper_refs}
\bibliographystyle{icml2018}